\documentclass[lettersize,journal]{IEEEtran}
\usepackage{amsmath,amsfonts}
\usepackage{algorithmic}
\usepackage{algorithm}
\usepackage{array}
\usepackage[caption=false,font=normalsize,labelfont=sf,textfont=sf]{subfig}
\usepackage{textcomp}
\usepackage{stfloats}
\usepackage{url}
\usepackage{verbatim}
\usepackage{graphicx}
\usepackage{cite}
\usepackage{multirow}
\usepackage{colortbl}
\usepackage[table]{xcolor}
\usepackage{amssymb}
\usepackage{amsthm}
\theoremstyle{plain}
\newtheorem{theorem}{Theorem}[section]

\newtheorem{proposition}[theorem]{Proposition}
\begin{document}

\title{MVAnimate: Enhancing Character Animation with \\ Multi-View Optimization}

\author{Tianyu~Sun, Zhoujie Fu, Bang Zhang, Guosheng Lin
\thanks{Tianyu Sun, Zhoujie Fu, and Guosheng Lin are with the College of Computing and Data Science (CCDS), Nanyang Technological University. }
\thanks{Bang Zhang is with Alibaba Group. }}



\maketitle

\begin{abstract}
The demand for realistic and versatile character animation has surged, driven by its wide-ranging applications in various domains. However, the animation generation algorithms modeling human pose with 2D or 3D structures all face various problems, including low-quality output content and training data deficiency, preventing the related algorithms from generating high-quality animation videos. Therefore, we introduce MVAnimate, a novel framework that synthesizes both 2D and 3D information of dynamic figures based on multi-view prior information, to enhance the generated video quality. Our approach leverages multi-view prior information to produce temporally consistent and spatially coherent animation outputs, demonstrating improvements over existing animation methods. Our MVAnimate also optimizes the multi-view videos of the target character, enhancing the video quality from different views. Experimental results on diverse datasets highlight the robustness of our method in handling various motion patterns and appearances. 
\end{abstract}

\begin{IEEEkeywords}
Character Animation, Multi-View Generation.
\end{IEEEkeywords}

\section{Introduction}

\begin{figure}[h]
  \centering
  \includegraphics[width=0.4\textwidth]{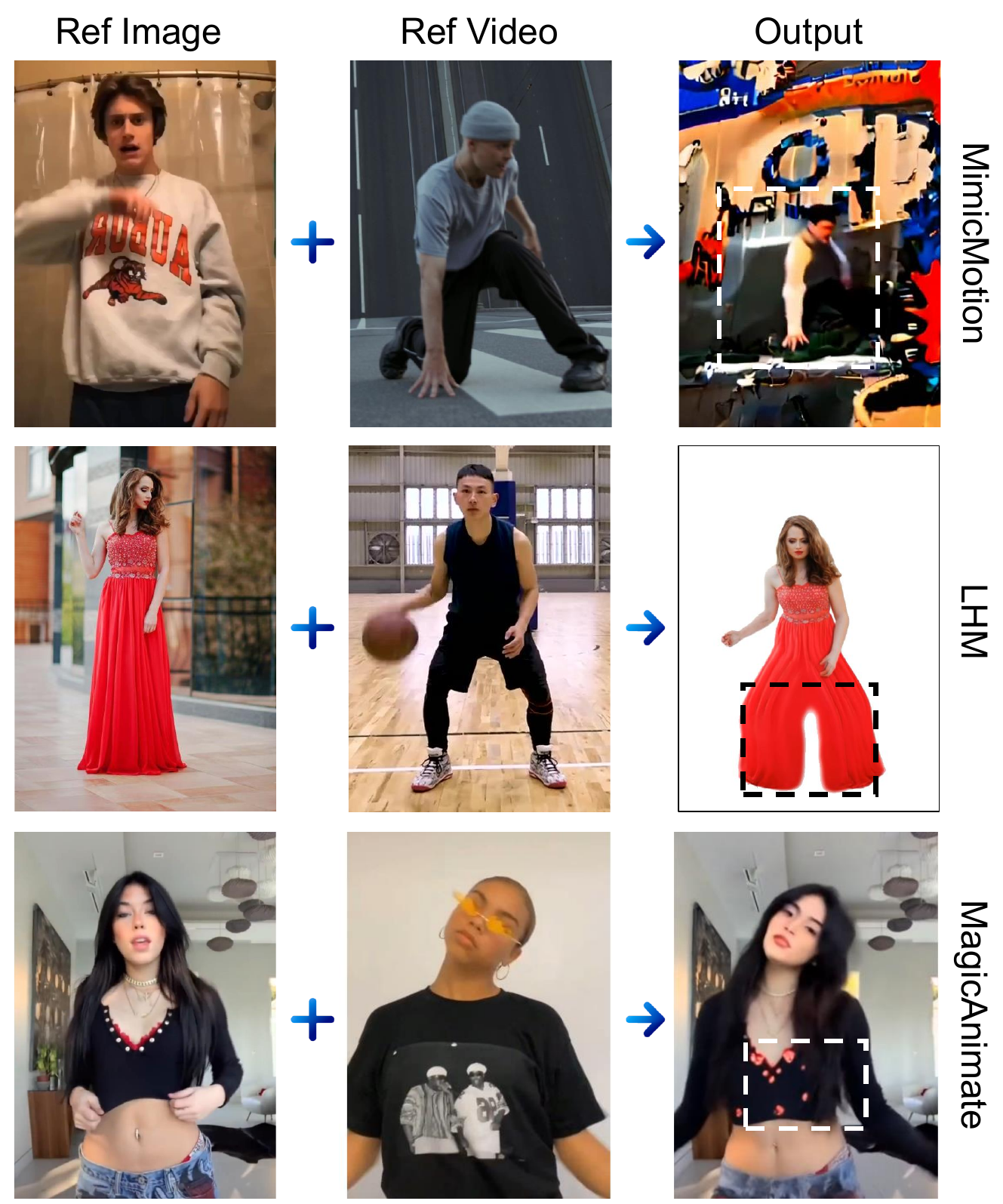}
  \caption{\textbf{Some common problems in related research works.} Here we show three examples of some of the SOTA animation algorithms. In the first row, when the reference video involves complex gestures, it is common for the performance to get relatively poor. In the second row, the texture of the reference video occasionally affects the output video and distorts the generated frame texture. }
  \label{fig:intro}
\end{figure}

\IEEEPARstart{C}{haracter} animation is a crucial area of research with wide-ranging implications for various fields, including virtual reality, video editing, video games, etc~\cite{tu2024stableanimator,ma2024follow,xiang2025remask,zhao2024ta2v,wang2012movie2comics}. By enabling the creation of realistic and dynamic human avatars, this technology enhances immersive experiences and presents new opportunities for creating multimedia content~\cite{yariv2024diverse,hu2023benchmark}. As the demand for high-quality, lifelike human videos grows, research in this domain becomes increasingly vital for driving innovation and meeting the needs of diverse applications.

Despite recent advances, existing character animation algorithms still face two major challenges that limit their generalization and visual quality. 
\textbf{One major problem} is that the model implemented for pose representation remains a significant bottleneck~\cite{xue2024follow,wang2024unianimate}. Many prior works adopt 2D pose estimation models such as OpenPose~\cite{cao2017realtime}, DensePose~\cite{guler2018densepose}, or DWPose~\cite{yang2023effective}, which offer fast and dense annotations. However, unlike other tasks~\cite{sun2023trosd,yin2026dformer++}, these 2D representations often lack sufficient spatial structure or depth cues, making it difficult to preserve the 3D consistency of the animated human figure, especially under complex poses. The first example in Fig.~\ref{fig:intro} shows a result of DWPose-based MimicMotion~\cite{zhang2024mimicmotion}, which fails to acquire the 3D information from the reference video. Some other 3D parametric models, such as SMPL~\cite{loper2015smpl} or SMPL-X~\cite{pavlakos2019expressive}, can provide full-body pose and shape estimations in 3D space, but they represent the human body as a mesh with fixed topology, struggling to accurately model detailed appearance, resulting in unnatural or overly rigid animations, such as the second displayed output of LHM~\cite{qiu2025lhm} with a SMPL-X extractor. These limitations motivate the need for a hybrid pose encoding strategy that can effectively combine the structural robustness of 3D models with the appearance-aware detail of 2D representations. 
\textbf{Another major problem} is that texture distortion remains an unresolved issue~\cite{xue2025human}. In many diffusion-based animation pipelines, the training scheme involves conditioning on both the reference image and reference video~\cite{chang2023magicpose,xu2024magicanimate,zhang2024mimicmotion}. When the training reference image is adopted from the reference video itself, the model may inadvertently learn to entangle the temporal texture patterns from the reference video with the static appearance of the target character, leading to visible artifacts and degraded visual fidelity during inference periods. As shown in the third row of Fig.~\ref{fig:intro}, this operation of MagicAnimate~\cite{xu2024magicanimate} exerts a "contamination" effect that often manifests as unwanted texture transfer, especially when the reference video contains strong shadows, motion blur, or inconsistent lighting.

To address these challenges, we posit that the multi-view prior not only enhances 3D coherence but also regularizes texture consistency across viewpoints. Therefore, we propose MVAnimate, a novel character animation framework based on multi-view optimization. Instead of relying solely on a single modality for pose representation, MVAnimate integrates both 2D and 3D pose features to leverage their respective strengths. We also incorporate multi-view information of the target character using pre-trained view synthesis models, enabling us to build a spatial-temporal encoder that preserves cross-frame coherence. Furthermore, we introduce a dedicated optimization scheme that decouples pose and appearance during training, effectively mitigating texture distortion issues. Finally, to address texture distortion caused by the mostly used training technique in character animation, we introduce a possible solution to augment existing datasets and refine the final results. 

Our pipeline supports both 2D and 3D output video generation and is built upon a diffusion model backbone enhanced with temporal and multi-view awareness. Compared with existing methods~\cite{hu2024animate,xu2024magicanimate,zhang2024mimicmotion}, our approach offers more stable results on challenging sequences, including fast motion, occlusions, and diverse clothing.

In summary, our main contributions are:
\begin{itemize}
\item We propose a multi-view character animation method that combines 2D and 3D pose representations, addressing the limitations of each and improving structural accuracy across diverse human motions.
\item We introduce a training strategy to alleviate texture distortion by decoupling temporal video features from target appearance, resulting in cleaner and more consistent visual outputs.
\item Besides fine 2D video editing function, our method also offers an optimization module to output multi-view videos of the character with higher quality. 
\end{itemize}

\section{Related Works}



\subsection{Character Animation}

In recent years, significant advancements have been made in the field of character animation. Hu et al.~\cite{hu2024animate} introduced Animate Anyone, a framework designed to transform character images into animated videos controlled by desired pose sequences. Chang et al.~\cite{chang2023magicpose} proposed MagicPose, a diffusion-based model for 2D human pose and facial expression retargeting, which maintains the identity of the subject while allowing for realistic pose and expression changes. Zhang et al.~\cite{zhang2024mimicmotion} designed MimicMotion, a high-quality human motion video generation framework that uses confidence-aware pose guidance to achieve temporal smoothness and robustness in generated videos. 

Despite these advancements, a common limitation among these algorithms is their struggle with generating long, temporally coherent videos without artifacts, and the need for extensive training data to achieve high-quality results. Unlike MagicPose~\cite{chang2023magicpose}, which relies on single-view consistency, our approach explicitly optimizes across views. Unlike MimicMotion~\cite{zhang2024mimicmotion}, which assumes uniform confidence across frames, our method learns per-view reliability via adaptive weighting (Proposition~\ref{prop:invvar}).

\begin{figure*}[t]
  \includegraphics[width=\textwidth]{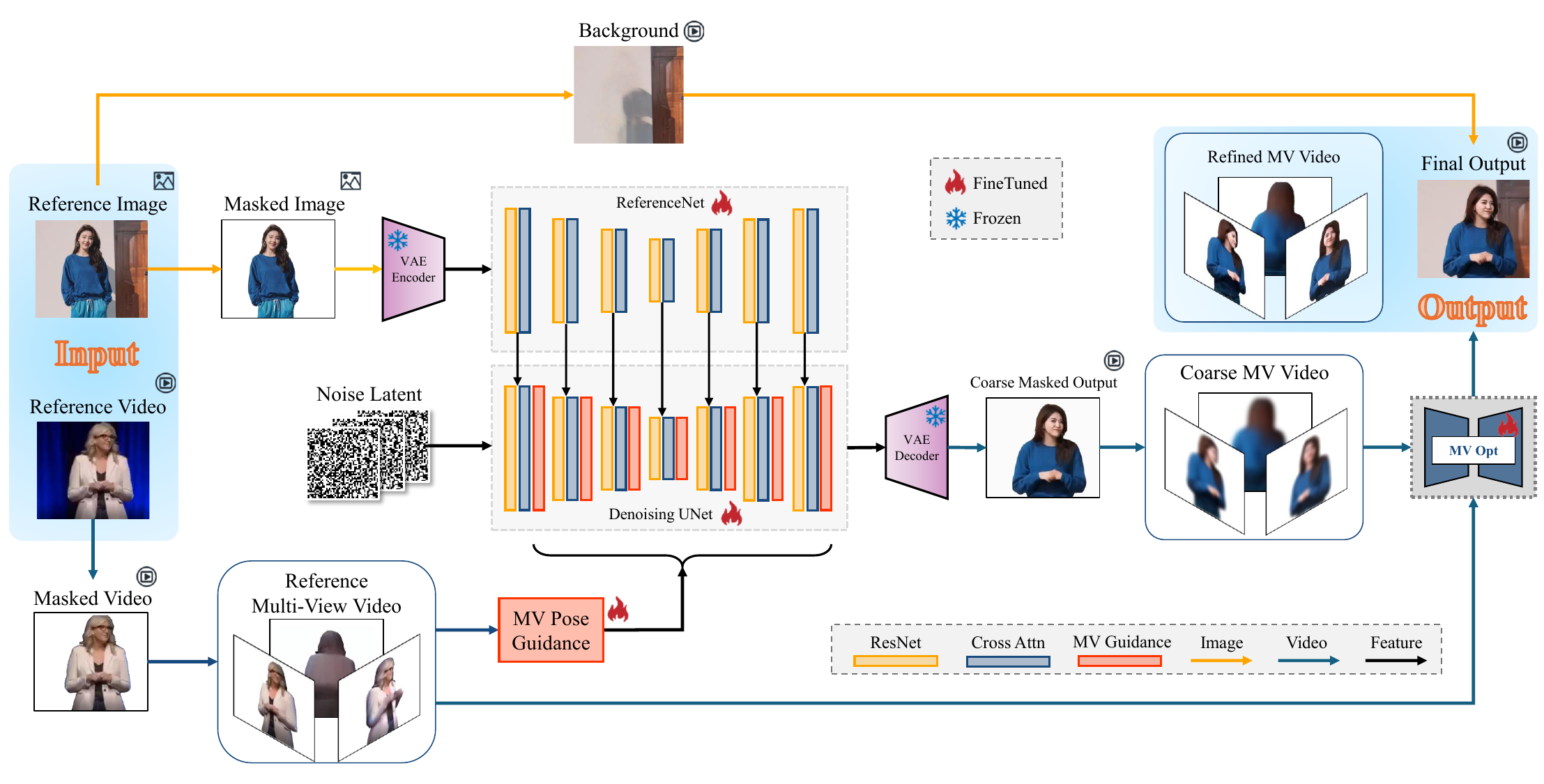}
  \caption{\textbf{Overview of the inference pipeline of our MVAnimate.} Our MVAnimate consists of two stages: 1) multi-view guided coarse video generation, which includes a ReferenceNet branch to extract appearance features from the reference image, and a denoising UNet branch to learn the multi-view pose features from the reference multi-view videos, and 2) multi-view optimization, which refines the coarse video output. We add the legends of the basic modules and data flow on the bottom right. }
  \label{fig:pipeline}
\end{figure*}

\subsection{Multi-View Video Generation}

Recent achievements in multi-view video generation have significantly improved the quality and realism~\cite{yi2024mvgamba,yi2024diffusion}. Xie et al.~\cite{xie2024sv4d} proposed the SV4D algorithm, which generates dynamic 3D content with multi-frame and multi-view consistency. The method simplifies the generation process and enhances the coherence of the resulting 3D content. There have also been a number of 3D human generation pipelines recently, such as SinGS~\cite{Wu2025sings}, HumanRef~\cite{zhang2024humanref}, Human4DiT~\cite{shao2024360}, etc. These works manage to generate novel views from the original one or more views of a human video. These works collectively offer new insights and methodologies for creating high-quality multi-view human motion videos.

\section{Preliminary}

\subsection{Task Setting}

Character animation is mainly a controllable video generation task, with the input of images, videos, and occasionally text, audio, etc. In our case, let $\mathcal{I}$ be the image space and $\mathcal{V}$ the space of $N$-frame videos.
Given a target (reference) image $I_{\mathrm{ref}}\in\mathcal{I}$ and a driving (reference) video $V_{\mathrm{ref}}=[f_1,\dots,f_N]\in\mathcal{V}$, MVAnimate seeks a generator
$G_\theta:\mathcal{I}\times\mathcal{V}\to\mathcal{V}$ such that
$V_\theta=G_\theta(I_{\mathrm{ref}},V_{\mathrm{ref}})$ preserves the identity and background of $I_{\mathrm{ref}}$ while following the pose/motion of $V_{\mathrm{ref}}$.
A pretrained multi-view synthesizer $S$ (SV4D 2.0) provides $M$ coarse novel-view videos
$\{V^{(m)}_{\mathrm{ref}}\}_{m=1}^M=S(V_{\mathrm{ref}})$ used as guidance features.
Denote pose extractor $\Phi$ (DWPose) and feature encoder $\Psi$. As for the background of the output, we keep it consistent with that of the reference image $I_{ref}$. 

In our work, we adopt SV4D 2.0~\cite{yao2025sv4d}, a pre-trained 3D video generation model, to generate a coarse multi-view video. This multi-view video is first implemented in the spatiotemporal guidance encoder to train the diffusion model. This coarse multi-view video is then optimized in the multi-view optimization module to enhance the quality of the 2D video output and the multi-view video output. 

\subsection{Diffusion Model Training}

Diffusion models (DM) in video generation tasks are often implemented as the backbone network, so we follow suit in our image encoding module. We adopt the vanilla variational autoencoder~\cite{kingma2013auto} to map the reference image to the latent space. Based on the unified formulation of the Denoising Diffusion Probabilistic Model~\cite{ho2020denoising}, the input $I_{ref}$ adds $t$ step of noise to get $I_t$. DM also introduces Gaussian latent noise $\mathcal N(\textbf{0},\textbf{I})$ to the latent space feature $z_0$ and produces a noisy latent $z_t$ at step $t$. In this case, we can denote the loss of our diffusion model as: 

\begin{equation}
\label{eq:dm}
\begin{split}
\begin{aligned}
    L=\mathbf{\mathbb{E}_{I_t, \mathcal G,\mathcal N(\textbf{0},\textbf{I}),t}||\epsilon-\epsilon_\theta(I_t|\mathcal G,t)||_2^2},
\end{aligned}
\end{split}
\end{equation}

where $\mathcal G$ denotes the multi-view guidance from multi-view videos $V_{ref}$, and $\epsilon_\theta$ marks a denoising function parameterized with parameter $\theta$. This will be the training objective of the diffusion backbone of our method. 

Furthermore, to justify the introduction of multi-view guidance $\mathcal G$, we need to examine the statistical optimality of conditioning on $\mathcal G$ based on Proposition~\ref{prep:score}. Note that for the propositions and theorems introduced in our paper, we provide the missing proofs in the supplementary material. 

\begin{proposition}[Optimal denoiser as conditional score]
\label{prep:score}
Let $z_t=z_0+\sigma_t \epsilon$ with $\epsilon\sim\mathcal{N}(0,I)$ and condition $\mathcal G$ given by multi-view features.
The minimizer $\epsilon_\theta^\star=\arg\min_\theta \mathbb{E}\|\epsilon - \epsilon_\theta(z_t,\mathcal G,t)\|_2^2$ satisfies $\epsilon_\theta^\star(z_t,\mathcal G,t)=\mathbb{E}[\epsilon\mid z_t,\mathcal G,t]$,
equivalently a scaled conditional score $\propto \nabla_{z_t}\log p(z_t\mid \mathcal G,t)$.
\end{proposition}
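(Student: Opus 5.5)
The plan is to treat the minimization as an $L^2$ regression problem over all measurable functions of $(z_t,\mathcal G,t)$, and then convert the resulting conditional mean into a score with Tweedie's formula. I assume the parametric family $\{\epsilon_\theta\}$ is rich enough to contain the unconstrained minimizer, and that $\mathbb{E}\|\epsilon\|_2^2<\infty$. Both hold here: $\epsilon$ is standard Gaussian, and the expressiveness assumption is implicit in the statement.

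\textbf{Step 1: the minimizer is the conditional mean.} Write $X=(z_t,\mathcal G,t)$ and $m(X)=\mathbb{E}[\epsilon\mid X]$. For any measurable $f$, expand
\begin{equation*}
\mathbb{E}\|\epsilon-f(X)\|_2^2=\mathbb{E}\|\epsilon-m(X)\|_2^2+\mathbb{E}\|m(X)-f(X)\|_2^2 .
\end{equation*}
The cross term $2\,\mathbb{E}\langle \epsilon-m(X),\,m(X)-f(X)\rangle$ vanishes by the tower property, since $m(X)-f(X)$ is $X$-measurable. The second term is therefore the only part that depends on $f$. It is zero exactly when $f=m$ almost everywhere with respect to the law of $X$. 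This gives $\epsilon_\theta^\star(z_t,\mathcal G,t)=\mathbb{E}[\epsilon\mid z_t,\mathcal G,t]$. If $t$ is sampled from a distribution with positive mass on each step, the same argument applies pointwise in $t$.

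\textbf{Step 2: link to the conditional score.} Fix $\mathcal G$ and $t$. Conditionally on $z_0$ and $\mathcal G$, the noisy latent satisfies $z_t\sim\mathcal N(z_0,\sigma_t^2 I)$. This uses that $\epsilon$ is independent of $(z_0,\mathcal G)$, which holds because the noise is drawn independently of the data and the guidance. Hence
\begin{equation*}
p(z_t\mid\mathcal G,t)=\int \mathcal N(z_t;z_0,\sigma_t^2I)\,p(z_0\mid\mathcal G)\,dz_0 .
\end{equation*}
Differentiate under the integral, justified by dominated convergence since Gaussian densities and their gradients are integrable, and use $\nabla_{z_t}\mathcal N(z_t;z_0,\sigma_t^2I)=-\frac{z_t-z_0}{\sigma_t^2}\mathcal N(z_t;z_0,\sigma_t^2I)$. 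Dividing by $p(z_t\mid\mathcal G,t)$ then gives Tweedie's identity:
\begin{equation*}
\nabla_{z_t}\log p(z_t\mid\mathcal G,t)=-\frac{1}{\sigma_t^2}\,\mathbb{E}[z_t-z_0\mid z_t,\mathcal G,t].
\end{equation*}
Since $z_t-z_0=\sigma_t\epsilon$, this equals $-\frac{1}{\sigma_t}\mathbb{E}[\epsilon\mid z_t,\mathcal G,t]$. Therefore
\begin{equation*}
\epsilon_\theta^\star=-\sigma_t\,\nabla_{z_t}\log p(z_t\mid\mathcal G,t),
\end{equation*}
which is the claimed scaled conditional score.

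\textbf{Main obstacle.} I expect the only delicate point to be Step 2's regularity and the conditioning structure, not the algebra. Two things need checking. First, $p(z_t\mid\mathcal G,t)$ must be strictly positive and differentiable; this is automatic because it is a Gaussian convolution with $\sigma_t>0$. Second, the independence of $\epsilon$ from $\mathcal G$ must hold. This matters here because $\mathcal G$ is computed from the multi-view videos and not from $z_t$. I would state that independence explicitly as an assumption. I would also remark that the variance-preserving parametrization $z_t=\alpha_tz_0+\sigma_t\epsilon$ changes only the constant, not the proportionality.
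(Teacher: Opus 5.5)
Your proof is correct. The paper in fact never proves this proposition: the supplementary ``Proof Supplement'' covers only the inverse-variance, convex-surrogate and monotone-descent results. The only justification it offers is the main-text remark that the statement follows from ``the orthogonality of the implemented MMSE estimators under the squared loss.''

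Your Step 1 is exactly that orthogonality argument, made explicit. It splits the risk Pythagorean-style, kills the cross term with the tower property, and adds the realizability assumption that the paper leaves implicit when it writes $\arg\min_\theta$.

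Your Step 2 goes beyond the paper. The clause ``equivalently a scaled conditional score'' is asserted there with no argument at all, and your Tweedie computation is what actually establishes it. It also fixes the constant, $\epsilon_\theta^\star=-\sigma_t\nabla_{z_t}\log p(z_t\mid\mathcal G,t)$. Finally, it exposes the hypothesis the statement silently needs: $\epsilon$ must be independent of $(z_0,\mathcal G)$, which the paper never states.

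Two small refinements are worth making.

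\emph{Interchange of gradient and integral.} This is best justified by boundedness, not by integrability in $z_t$. The norm $\|\nabla_{z_t}\mathcal N(z_t;z_0,\sigma_t^2 I)\|$ is bounded uniformly in $(z_t,z_0)$, because $r e^{-r^2/(2\sigma_t^2)}\le \sigma_t e^{-1/2}$. A constant therefore dominates it when integrated against the conditional law of $z_0$ given $\mathcal G$. That law need not have a density, so write the convolution against the measure rather than against $p(z_0\mid\mathcal G)\,dz_0$. The same bound shows that the posterior mean of $z_t-z_0$ exists without any moment assumption on $z_0$.

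\emph{Variance-preserving remark.} In the parametrization $z_t=\alpha_t z_0+\sigma_t\epsilon$, the constant does not change at all. Since $z_t-\alpha_t z_0=\sigma_t\epsilon$, the identical computation again gives $\epsilon^\star=-\sigma_t\nabla_{z_t}\log p(z_t\mid\mathcal G,t)$.
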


In this case, the orthogonality of the implemented MMSE estimators under the squared loss leads to Proposition~\ref{prep:score}, which proves that our pose injection at every layer improves the conditional score estimate.

\section{Methodology}
\label{sec:sim}

In this section, we elaborate on our MVAnimation and the crucial modules, including the Multi-View Pose Guidance Network and Multi-View Optimization Module. 

\subsection{Pipeline Overview}

Figure~\ref{fig:pipeline} shows the pipeline of our MVAnimation. The objective of our method is to output a high-quality video of the character from the reference image to dance in the same pose sequence as the reference video. We employ a vanilla LDM-based backbone, adopting the ReferenceNet structure~\cite{zhang2023adding} to guide the diffusion and denoising process, following most generation methods~\cite{liu2023conditional,zhang2024mimicmotion,tu2024stableanimator} for human character animation. The VAE encoder operates on the reference images and generates the latent feature, which interacts with the Denoising UNet. To adapt to video tasks, we inflate the 2D U-Net~\cite{xu2024magicanimate} for image-related tasks to a 3D U-Net. We add temporal attention layers to the Denoising U-Net by implementing self-attention in the time dimension. This will enhance the temporal consistency of the generated video. For the training process, to reduce data requirements and computational overhead, we build upon the pre-trained Stable Video Diffusion (SVD), an open-source image-to-video diffusion model trained on a large-scale video dataset, which demonstrates strong performance in both video quality and diversity compared to contemporary methods. The architecture of MVAnimate is designed to incorporate SVD as a backbone and fine-tune its parameters based on the LoRA scheme~\cite{hu2022lora}.

Following most of the pipelines requiring segmentation~\cite{zhou2023propainter,li2025segment}, we apply Segment Anything Model 2 (SAM2)~\cite{ravi2024sam} to separate the character and the surrounding background. Additionally, to maintain a stable background in the output video, we adopt the Stable Diffusion v2 inpainting model to obtain the background texture from the reference image. Considering that the camera views might be dynamic, we apply patch matching to the inpainted background to ensure the necessary movement of the background in the final output.

\subsection{Multi-View Pose Guidance Network}

\paragraph{Motivation for Adaptive View Weighting}

Before introducing our attention weighting formulation, we first establish a theoretical foundation for why different views should contribute unequally during feature fusion. To justify our motivation, we form the attention block as a kernel regression process. Let $q_t\in\mathbb{R}^d$ be the query token from the main-view latent at step $t$, and $\{k^{(m)}_t,v^{(m)}_t\}_{m=1}^M$ the (key,value) pairs from $M$ guided views. Our MV attention outputs
\begin{equation}
\label{eq:mvopt}
\begin{split}
\begin{aligned}
    \hat{v}_t = \sum_{m=1}^M \alpha^{(m)}_t v^{(m)}_t,\quad
    \alpha^{(m)}_t=\frac{e^{\omega^{(m)}\langle q_t,k^{(m)}_t\rangle}}
    {\sum_{j=1}^M e^{\omega^{(j)}\langle q_t,k^{(j)}_t\rangle}}\!,
\end{aligned}
\end{split}
\end{equation}
where $\omega^{(m)}\!>\!0$ are learned per-view importance weights (Sec.~\ref{sec:mvopt}).
Interpreting $v^{(m)}_t$ as noisy measurements of a latent “canonical” feature $v^\star_t$ with $\operatorname{Var}[v^{(m)}_t]\!=\!\sigma_m^2 I$, the following holds.

Under this setting, each view’s latent embedding can be regarded as a noisy observation of an underlying “true” canonical feature, with varying reliability depending on factors such as visibility and occlusion. To quantify this intuition, we derive an optimal estimator that minimizes the reconstruction error under a Gaussian noise assumption. This analysis leads to Proposition~\ref{prop:invvar}, which shows that the optimal fusion weights are inversely proportional to each view’s noise variance—thereby providing a principled justification for the adaptive weighting strategy used in our multi-view attention module.


\begin{proposition}[Inverse-variance optimality of view weighting]
\label{prop:invvar}
Let $v_t^\star \in \mathbb{R}^d$ be the latent canonical feature and suppose we observe
$v_t^{(m)} = v_t^\star + \varepsilon_t^{(m)}$ for $m=1,\dots,M$, where the noise vectors
$\varepsilon_t^{(m)}$ are independent and distributed as
$\varepsilon_t^{(m)} \sim \mathcal{N}(0,\sigma_m^2 I_d)$ with $\sigma_m^2 > 0$.
Consider linear estimators of the form
\[
\hat{v}_t = \sum_{m=1}^M \beta_m v_t^{(m)} \quad \text{subject to} \quad \sum_{m=1}^M \beta_m = 1.
\]
Then the coefficients minimizing the mean squared error
$\mathbb{E}\|\hat{v}_t - v_t^\star\|_2^2$ are
\[
\beta_m = \frac{\sigma_m^{-2}}{\sum_{j=1}^M \sigma_j^{-2}}, \quad m=1,\dots,M.
\]
\end{proposition}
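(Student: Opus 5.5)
The plan is to reduce the mean squared error to a finite-dimensional quadratic in the weights $\beta=(\beta_1,\dots,\beta_M)$ and then minimize it over the affine constraint $\sum_m\beta_m=1$.

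\textbf{Step 1: express the error as a weighted sum of noise.} Substitute $v_t^{(m)}=v_t^\star+\varepsilon_t^{(m)}$ into the estimator. The constraint gives
\[
\hat v_t - v_t^\star=\sum_m\beta_m\varepsilon_t^{(m)} ,
\]
so the unknown $v_t^\star$ cancels. This is the only place the constraint enters before the optimization. Without it, the bias term $(\sum_m\beta_m-1)v_t^\star$ would survive.

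\textbf{Step 2: compute the objective.} Expand the squared norm and take expectations. Independence and zero mean kill every cross term $\mathbb{E}\langle\varepsilon^{(m)},\varepsilon^{(j)}\rangle$ with $m\neq j$. The diagonal terms give $\mathbb{E}\|\varepsilon^{(m)}\|^2=d\sigma_m^2$. Hence
\[
\mathbb{E}\|\hat v_t-v_t^\star\|_2^2=d\sum_m\beta_m^2\sigma_m^2 .
\]
Gaussianity is not needed here; only the first two moments are used.

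\textbf{Step 3: minimize under the constraint.} Set $\sum_m\beta_m=1$ and write $\beta_m=\sigma_m^{-1}\cdot(\beta_m\sigma_m)$. Cauchy--Schwarz then gives
\[
1=\Bigl(\sum_m\beta_m\Bigr)^2\le\Bigl(\sum_m\sigma_m^{-2}\Bigr)\Bigl(\sum_m\beta_m^2\sigma_m^2\Bigr).
\]
So the objective is at least $d/\sum_j\sigma_j^{-2}$.

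Equality holds exactly when $\beta_m\sigma_m\propto\sigma_m^{-1}$, i.e. $\beta_m\propto\sigma_m^{-2}$. Normalizing by the constraint gives the stated $\beta_m$. Because the objective is strictly convex (all $\sigma_m^2>0$) on an affine set, this minimizer is unique. Equivalently, a Lagrange multiplier argument with $2d\beta_m\sigma_m^2=\lambda$ leads to the same answer.

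\textbf{Main obstacle.} There is no genuinely hard step. The point needing care is Step 1, where the constraint removes the dependence on $v_t^\star$, together with the vanishing of the cross terms. Strict convexity, which secures uniqueness, should be stated explicitly. I would close by noting how this connects to Eq.~\eqref{eq:mvopt}: the softmax weights with learned $\omega^{(m)}$ can emulate $\beta_m$, although the proposition itself is about the constrained linear estimator only.
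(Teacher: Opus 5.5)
Your proof is correct and follows the paper's argument. The paper likewise substitutes the noise model, uses the constraint to cancel $v_t^\star$, and reduces the MSE to $d\sum_m\beta_m^2\sigma_m^2$ via independence and zero mean; it then minimizes with the Lagrange-multiplier route you mention as an alternative. Your Cauchy--Schwarz version of that last step is a minor variant that gives the minimal value $d/\sum_j\sigma_j^{-2}$ and uniqueness (from the equality case) at once, rather than relying on convexity to make the stationary point the global minimizer.
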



In practice, our network does not know $\sigma_m^2$ explicitly. 
However, the attention score $\omega^{(m)} \langle q_t, k_t^{(m)} \rangle$ increases for reliable views (high similarity), 
so the softmax over these scores acts as a learned proxy for the inverse-variance rule in Proposition~\ref{prop:invvar}. Hence, the learned attention can be viewed as a differentiable inverse-variance estimator.

\paragraph{Network Details}

On account of a need to acquire prior multi-view information from the monocular input, we utilize SV4D 2.0~\cite{yao2025sv4d} to generate \textbf{coarse} multi-view videos of the character from the reference video and to train the MV Pose Guidance Network with them. We select SV4D 2.0 instead of its previous version due to the clear boundary it generates, improving the pose detection accuracy for our MV-Opt module, which will be further explained later. The view number is empirically set to 8 so that we can obtain a fine geometric relation between these views. 

In most recent works~\cite{xu2024magicanimate,zhou2022magicvideo}, spatial and temporal features extracted from animation videos have been implemented to maintain unified frame and pixel-wise characteristics. When considering the input of multi-view character videos, the spatial and temporal consistency should be measured from different views~\cite{yi2024video}. Moreover, the traditional spatial consistency applied mainly in the 2D domain is ``upgraded'' to 3D due to the geometric prior brought by the inter-relation of the multi-view video frames. 

\begin{figure}
  \includegraphics[width=0.4\textwidth]{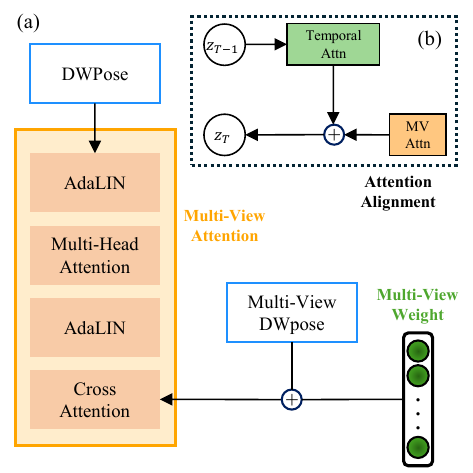}
  \centering
  \caption{\textbf{Attention Alignment in Denoising U-Net.} The module aligns temporal attention and MV attention for the reference videos. (a) depicts the structure of the multi-view attention scheme in the Multi-View Pose Guidance Network, and (b) is an overview of the attention alignment in Denoising U-Net. }
  \label{fig:block}
\end{figure}

Therefore, we conduct a multi-view attention block and align the MV attention with temporal attention. Multi-view attention is shown in Fig.~\ref{fig:block} (a). Inspired by spatial attention~\cite{zhou2022magicvideo,dosovitskiy2020image}, our target is to keep consistency in video texture. We apply multi-head attention to compute a weighted sum of the input, followed by a cross-attention model to compute the frame token $z_{t-1}$ from the main view with the multi-view features. The multi-view videos are given a multi-view weight and then computed into this MV attention block. In the middle of the Multi-View Attention block, we utilize AdaLIN~\cite{adalin} to implement normalization, which is widely implemented in style transfer~\cite{pang2021image} and content generation~\cite{sun2024diffusion} tasks, to replace the more common Linear Normalization to ensure the attention method is robust to shape or texture variation. 

\paragraph{Theoretical Justification for Attention Alignment}

While the additive alignment between temporal and multi-view attentions (Fig.~\ref{fig:block}~(b)) appears heuristic, it can be interpreted as a principled optimization process. We justify this intuition by introducing the following setting and proposition.

Let $A_{\mathrm{temp}}$ and $A_{\mathrm{mv}}$ be the (row-stochastic) temporal and multi-view attention maps at a given block.
Define the latent update as $z_{t-1} = (I - \eta \nabla \mathcal{E})(z_t)$ that performs one gradient step on

\begin{equation}
\label{eq:dm}
\begin{split}
\begin{aligned}
\mathcal{E}(z)=\tfrac{1}{2}\|z - A_{\mathrm{temp}} z\|_2^2 + \tfrac{\lambda}{2}\|z - A_{\mathrm{mv}} z\|_2^2.
\end{aligned}
\end{split}
\end{equation}

Then the additive alignment $A_{\mathrm{joint}}=\tfrac{1}{1+\lambda}(A_{\mathrm{temp}} + \lambda A_{\mathrm{mv}})$ is the first-order optimality condition for the above proximal step. We introduce a proposition to examine the convexity of the optimization process. 

\begin{proposition}[Convex surrogate]
\label{prop:convex}
If $A_{\mathrm{temp}}$ and $A_{\mathrm{mv}}$ are nonexpansive (spectral norms $\le 1$), 
then $\mathcal{E}$ is convex and the additive alignment is a nonexpansive averaged operator.
\end{proposition}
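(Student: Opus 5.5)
Convexity is direct. Write $B_1 = I - A_{\mathrm{temp}}$ and $B_2 = I - A_{\mathrm{mv}}$, so that
\[
\mathcal{E}(z)=\tfrac12\|B_1 z\|_2^2+\tfrac{\lambda}{2}\|B_2 z\|_2^2 .
\]
This is a quadratic form $\tfrac12 z^\top H z$ with Hessian $H = B_1^\top B_1 + \lambda B_2^\top B_2$. Each summand is positive semidefinite, and $\lambda \ge 0$, so $H \succeq 0$ and $\mathcal{E}$ is convex. Nonexpansiveness is not needed for this step. It is still worth recording that $\|B_i\|_2 \le \|I\|_2+\|A_i\|_2 \le 2$, which gives the Lipschitz bound
\[
\|H\|_2 \le 4(1+\lambda)
\]
for $\nabla\mathcal{E}$. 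This bound fixes a step size $\eta \le 1/(2(1+\lambda))$ for which $I-\eta\nabla\mathcal{E}$ is itself averaged, by the standard Baillon--Haddad argument for gradients of convex $L$-smooth functions. I would state that as a side remark.

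For the operator claim, I read ``averaged'' in the Krasnosel'skii--Mann sense: $T$ is averaged if $T=(1-\alpha)I+\alpha N$ with $N$ nonexpansive and $\alpha\in(0,1)$. The plan has two steps.

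First, $A_{\mathrm{joint}}=\tfrac{1}{1+\lambda}(A_{\mathrm{temp}}+\lambda A_{\mathrm{mv}})$ is a convex combination of nonexpansive linear maps. The triangle inequality then gives
\[
\|A_{\mathrm{joint}}\|_2\le \tfrac{1}{1+\lambda}\bigl(\|A_{\mathrm{temp}}\|_2+\lambda\|A_{\mathrm{mv}}\|_2\bigr)\le 1 ,
\]
so $A_{\mathrm{joint}}$ is nonexpansive.

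Second, averagedness needs more than nonexpansiveness; a rotation is nonexpansive but not averaged. The clean fix is to exhibit the averaged structure in the operator that is actually used. The first-order condition from the setup, $z = A_{\mathrm{joint}} z$ as the fixed-point equation, suggests taking the relaxed update
\[
T_\alpha=(1-\alpha)I+\alpha A_{\mathrm{joint}},\qquad \alpha\in(0,1),
\]
which is averaged by definition because $A_{\mathrm{joint}}$ is nonexpansive. I would then check that the gradient step matches this form to first order. Using the linearization $\nabla\mathcal{E}(z)\approx (1+\lambda)\bigl(I-A_{\mathrm{joint}}\bigr)z$, the update $I-\eta\nabla\mathcal{E}$ equals $T_\alpha$ with $\alpha=\eta(1+\lambda)$. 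The condition $\eta<1/(1+\lambda)$ then puts $\alpha$ in $(0,1)$.

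The main obstacle is that this linearization is exact only when the attention maps are symmetric. In general,
\[
\nabla\mathcal{E}(z) = B_1^\top B_1 z+\lambda B_2^\top B_2 z ,
\]
which equals $(1+\lambda)(I-A_{\mathrm{joint}})z$ only if the cross terms $A^\top A - A^\top$ can be dropped. I would try two routes.

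The first route is to assume, or symmetrize to, $A_i = A_i^\top$ with spectrum in $[0,1]$. This holds for row-stochastic attention maps that are also symmetric, up to a similarity with the degree matrix. In that case $I-A_{\mathrm{joint}}$ is positive semidefinite with norm at most $1$, and the identification goes through rigorously.

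The second route, if symmetry is unavailable, is to drop the exact identification. I would instead prove separately that $I-\eta\nabla\mathcal{E}$ is averaged for small $\eta$ (by convexity and $L$-smoothness), and that $A_{\mathrm{joint}}$ yields an averaged operator once relaxed as $T_\alpha$. The additive alignment is then interpreted as this relaxed fixed-point iteration.
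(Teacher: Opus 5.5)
Your convexity argument (the Hessian $B_1^\top B_1+\lambda B_2^\top B_2\succeq 0$) and your triangle-inequality bound $\|A_{\mathrm{joint}}\|_2\le 1$ are exactly the paper's proof. That is all the paper establishes: it uses ``averaged'' only loosely, meaning a weighted combination whose weights sum to one. You are right that nonexpansiveness is not needed for convexity. You are also right that Krasnosel'skii--Mann averagedness of $A_{\mathrm{joint}}$ does not follow. In fact it fails even for row-stochastic maps: with $A_{\mathrm{temp}}=A_{\mathrm{mv}}=\begin{pmatrix}0&1\\1&0\end{pmatrix}$, the eigenvalue $-1$ rules out $\|A_{\mathrm{joint}}-(1-\alpha)I\|_2\le\alpha$ for every $\alpha\in(0,1)$.

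The step that fails is your identification of $I-\eta\nabla\mathcal{E}$ with $T_\alpha$. There is nothing to linearize, because $\nabla\mathcal{E}(z)=Hz$ is already linear, and the discrepancy is exact:
\[
\nabla\mathcal{E}(z)-(1+\lambda)(I-A_{\mathrm{joint}})z=-\bigl(A_{\mathrm{temp}}^\top B_1+\lambda A_{\mathrm{mv}}^\top B_2\bigr)z .
\]
Symmetry does not remove this term. For symmetric $A$ with spectrum in $[0,1]$ you only get $(I-A)^2\preceq I-A$, with equality iff $A$ is an orthogonal projection. So your first route does not ``go through rigorously.''

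What is true is that the fixed-point sets agree, and this is what gives the paper's ``first-order optimality'' phrase a precise meaning. Suppose $z=A_{\mathrm{joint}}z$. Then $\|z\|\le\tfrac{1}{1+\lambda}\bigl(\|A_{\mathrm{temp}}z\|+\lambda\|A_{\mathrm{mv}}z\|\bigr)\le\|z\|$, so both norms equal $\|z\|$. For $\lambda>0$, equality in the Euclidean triangle inequality then forces $A_{\mathrm{temp}}z=A_{\mathrm{mv}}z=z$. Hence $z\in\ker B_1\cap\ker B_2=\ker H=\arg\min\mathcal{E}$, and the converse is immediate. So $T_\alpha$ and $I-\eta\nabla\mathcal{E}$ are two different averaged operators with the same fixed points. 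Your second route, together with this lemma, is the honest version of the claim.

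Finally, the Baillon--Haddad step-size range must be strict, $\eta<1/(2(1+\lambda))$. Take $A_{\mathrm{temp}}=A_{\mathrm{mv}}=-I$, which is nonexpansive. At equality you get $I-\eta\nabla\mathcal{E}=-I$, which is not averaged.
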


Proposition~\ref{prop:convex} theoretically justifies the additive attention alignment used in our denoising U-Net, showing that it minimizes a convex joint energy and ensures stable, non-expansive updates in feature space.


\subsection{Multi-View Optimization Module}
\label{sec:mvopt}

With the coarse 2D animation result, we implement our multi-view optimization method to generate a refined multi-view animation video of the character. This helps us obtain 3D animation videos with higher quality. Therefore, this module is trained independently of the previous modules and then implemented in the inference period. In the optimization process, we first refine the main (front) view video frame $f_t$ with the timestamp $t$, then optimize the novel views at the same timestamp from front to back, and then move to the frames with timestamp $t+1$. For each timestamp, the optimization operation is implemented 8 times before moving on to the next timestamp. 

Considering that the novel views with smaller angles between the original ones share more pixels in common~\cite{wu2020multi,10542122}, we choose to maintain a tensor $\omega_{mv}$ for the multiple views as the input for the optimization process, and the views with smaller angles have larger weights on the semantic difference evaluation. The initial weights of front views are set higher, where $\omega_{mv}$ is initiated with $[0.25,0.25,0.1,..,0.1]$, and the initial value of the closest views shares the highest values. $\omega_{mv}$ is trained throughout the denoising steps, and then frozen to be implemented in the MV-Opt module afterwards. The elements of this tensor form a mask of all pixels that have a weight share on the original view. 

In the optimization process, to balance the multi-view and temporal prior restrictions, we design a loss function integrating these features:

\begin{equation}
\label{eq:loss}
\begin{split}
\begin{aligned}
    \mathcal L=\omega_1*\mathcal L_{temp}+\omega_2*\mathcal L_{mvP}+\omega_3*\mathcal L_{mvS}.
\end{aligned}
\end{split}
\end{equation}

The loss function consists of three sub-losses. Temporal loss $\mathcal L_{temp}$ computes the MSE loss on DWPose difference between the current frame $f_t$ and the previous frame $f_{t-1}$:

\[
\mathcal L_{temp} = ||DWPose(f_t) - DWPose(f_{t-1})||_2^2.
\]

Multi-view pose loss $\mathcal L_{mvP}$ computes the MSE loss on the corresponding DWPose keypoints vertically between the current frame $f_t$ and the previous frame $f_{t-1}$. For this element, we offer the sub-loss from each view with the same weight:

\[
\mathcal L_{mvP} = \sum_m ||DWPose(f_t^{(m)}) - DWPose(f_{t-1}^{(m)})||_2^2.
\]

The multi-view semantic loss $\mathcal L_{mvS}$ computes the multi-view pixel-wise difference in multi-view frames and adds up under different weights $\omega_{mv}$:

\[
\mathcal L_{mvS} = \sum_m ||f_t^{(m)} - f_t||_2^2.
\]

The coefficients $\omega_1$, $\omega_2$, $\omega_3$ balance temporal smoothness, pose coherence, and semantic similarity, respectively, and are tuned empirically ($\omega_1$ = 1, $\omega_2$ = 0.1, $\omega_3$ = 0.02). Moreover, we need to guarantee the justification of the optimization process with this loss design. Therefore, we construct a Theorem~\ref{thm:monotone} to discuss the descent feature of the MV-Opt minimization. 

\begin{theorem}[Monotone descent of MV-Opt (block coordinate minimization)]
\label{thm:monotone}
Let 
$F(V) = \omega_1 \mathcal{L}_{\text{temp}}(V) + \omega_2 \mathcal{L}_{\text{mvP}}(V) + \omega_3 \mathcal{L}_{\text{mvS}}(V)$
be the multi-view optimization objective in Sec.~IV-C, where each loss is nonnegative and continuously differentiable. 
Consider the cyclic MV-Opt schedule that (i) fixes all frames except $t$ and updates $f_t$, and (ii) within $f_t$ fixes all views except $m$ and updates $f_t^{(m)}$, sweeping $m=1,\dots,M$, and then proceeds to $t+1$.
Assume that every block update produces a sufficient decrease, i.e.
\[
F(x^{k+1}) \le F(x^{k}) - c \|x^{k+1}-x^{k}\|^2
\]
for some $c>0$, and that $F$ has bounded level sets.
Then the sequence $\{F(x^{k})\}$ is nonincreasing and convergent, and every limit point of $\{x^{k}\}$ is a block-coordinate stationary point of $F$.
\end{theorem}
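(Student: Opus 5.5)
The plan is the standard sufficient-decrease argument for block coordinate descent. Index every single block update (one view $m$ of one frame $t$) by $k$, so $x^k$ is the full multi-view video after $k$ updates.

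\textbf{Step 1: monotonicity and convergence of $F(x^k)$.} The sufficient-decrease hypothesis gives $F(x^{k+1})\le F(x^k)$ immediately. Each of $\mathcal L_{temp},\mathcal L_{mvP},\mathcal L_{mvS}$ is nonnegative and $\omega_i>0$, so $F\ge 0$. A nonincreasing sequence bounded below converges; call the limit $F^\star$.

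\textbf{Step 2: summability of the steps and boundedness.} Summing the decrease inequality from $k=0$ to $K$ telescopes to
\[
c\sum_{k=0}^{K}\|x^{k+1}-x^k\|^2\le F(x^0)-F^\star .
\]
Hence $\sum_k\|x^{k+1}-x^k\|^2<\infty$, and in particular $\|x^{k+1}-x^k\|\to 0$. All iterates lie in the level set $\{F\le F(x^0)\}$, which is bounded by assumption. So $\{x^k\}$ has limit points, and by continuity of $F$ every limit point $\bar x$ satisfies $F(\bar x)=F^\star$.

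\textbf{Step 3: block stationarity at limit points.} This is the main obstacle. The sufficient-decrease assumption alone does not force the partial gradient of the updated block to vanish. I will therefore read the block update as an exact (or first-order exact) minimization over $f_t^{(m)}$ with the other blocks fixed, which is the intended meaning of "updates $f_t^{(m)}$". Under that reading, $\nabla_{b_k}F(x^{k+1})=0$, where $b_k$ is the block updated at step $k$. If instead each update is a gradient step of size $\eta$ with $\eta$ small relative to the blockwise Lipschitz constant (finite on the compact level set since $F$ is $C^1$), I will use $\|\nabla_{b_k}F(x^k)\|=\eta^{-1}\|x^{k+1}-x^k\|\to 0$.

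Now take a subsequence $x^{k_j}\to\bar x$. Because the schedule is cyclic with period $P=TM$, each block $b$ is updated at some step $k_j+r_b$ with $0\le r_b<P$. By Step 2,
\[
\|x^{k_j+r_b}-x^{k_j}\|\le\sum_{i<P}\|x^{k_j+i+1}-x^{k_j+i}\|\to 0,
\]
so $x^{k_j+r_b}\to\bar x$ as well. Continuity of $\nabla F$ then transfers the vanishing (or asymptotically vanishing) block gradient at those iterates to $\nabla_b F(\bar x)=0$. Since this holds for every block $b$, $\bar x$ is a block-coordinate stationary point.

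I will state the exact-minimization or stepsize interpretation explicitly as the modeling assumption behind "updates", since it is what step (iii) needs.
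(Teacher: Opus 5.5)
Your proof is correct under the update model you make explicit, and its skeleton matches the paper's: $F\ge 0$ plus monotone decrease gives convergence of $F(x^k)$, bounded level sets give limit points, and then stationarity. The difference is the last step. The paper handles it by appealing to ``standard results'' on cyclic block coordinate descent~\cite{tseng2001convergence}. You prove it directly: the telescoped bound $\sum_k\|x^{k+1}-x^k\|^2<\infty$ puts every iterate within one period of length $P$ at the same limit point, and the vanishing block gradient then passes to $\nabla_b F(\bar x)=0$ for every block. (For the exact-minimization reading the relevant iterate is $x^{k_j+r_b+1}$, which your window bound also covers.)

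Your caveat in Step 3 is essential, because the theorem as literally stated is false. The identity update $x^{k+1}=x^k$ satisfies the decrease inequality and stalls at any non-stationary $x^0$. Strictly decreasing updates can fail too: for $F(x)=x^2$, the iteration $x^{k+1}=1+\tfrac12(x^k-1)$ from $x^0=2$ satisfies the inequality with $c=2$ but converges to the non-stationary point $1$.

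The paper's citation silently brings in exact block minimization, which is Tseng's setting. There, extra structure such as uniqueness of block minimizers is what rules out Powell-type cycling. In your argument, the $c>0$ decrease does that job by forcing the steps to vanish. So your route gives a self-contained proof whose hypotheses actually imply the conclusion. Both of your readings are covered by one relative-error condition, $\|\nabla_{b_k}F(x^{k+1})\|\le C\|x^{k+1}-x^k\|$ (or the same at $x^k$), which is the cleanest hypothesis to add.

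One small slip: $C^1$ does not give finite Lipschitz constants for the block gradients on a compact set (e.g.\ $|x|^{3/2}$ near $0$). You never need this, though, since sufficient decrease is assumed and you only use $\|\nabla_{b_k}F(x^k)\|=\eta^{-1}\|x^{k+1}-x^k\|$.
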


This result guarantees that the alternating updates in the MV-Opt module yield a stable and monotonically decreasing objective, ensuring convergence without oscillation. It provides theoretical support for the consistent improvement observed during multi-view refinement in MVAnimate.

\section{Experimental Results}

In this section, we introduce the experimental results of our algorithm. We also elaborate on the ablation studies of our method based on the results on several datasets. Some more 2D and multi-view video results are disclosed in the video supplementary material. 

\begin{table*}[t]
    \centering
    \caption{\textbf{2D Character Animation Results on TikTok Dataset.} \textcolor{red}{Red} marks the best algorithm, and \textcolor{blue}{blue} stands for the second-best algorithm. }
    \begin{tabular}{l|cccc|cc}
        \hline
        \multirow{2}{*}{Method} & \multicolumn{4}{c}{\cellcolor{lightgray!30}Image} & \multicolumn{2}{|c}{\cellcolor{lightgray!30}Video}\\
        \cline{2-7}
        ~ & PSNR $\uparrow$ & SSIM $\uparrow$ & LPIPS $\downarrow$ & FID $\downarrow$ & FID-VID $\downarrow$ & FVD $\downarrow$\\
        \hline
        DreamPose~\cite{karras2023dreampose} & 28.01 & 0.509 & 0.242 & 38.81 & 78.77 & 379.57\\
        MimicMotion~\cite{zhang2024mimicmotion} & 28.97 & 0.747 & 0.240 & 33.62 & 40.21 & 153.72\\
        DisCo~\cite{wang2024disco} & 29.09 & 0.674 & 0.285 & 28.31 & 55.17 & 267.75\\
        MagicAnimate~\cite{xu2024magicanimate} & 29.16 & 0.714 & \textbf{\textcolor{red}{0.239}} & 32.09 & \textcolor{blue}{39.76} & 147.09\\
        Animate-x~\cite{tan2024animate} & 29.39 & 0.730 & 0.242 & \textcolor{blue}{27.16} & 40.26 & \textcolor{blue}{139.05}\\
        MagicPose~\cite{chang2023magicpose} & \textcolor{blue}{29.53} & \textcolor{blue}{0.752} & 0.292 & \textbf{\textcolor{red}{25.50}} & 46.30 & 165.71\\
         \hline
        MVAnimate & \textbf{\textcolor{red}{29.96}} & \textbf{\textcolor{red}{0.775}} & \textcolor{blue}{0.240} & 30.16 & \textbf{\textcolor{red}{39.11}} & \textbf{\textcolor{red}{132.90}}\\
         \hline
    \end{tabular}
    \label{tab:tiktok}
\end{table*}

\begin{figure*}[t]
  \centering
  \includegraphics[width=0.97\textwidth]{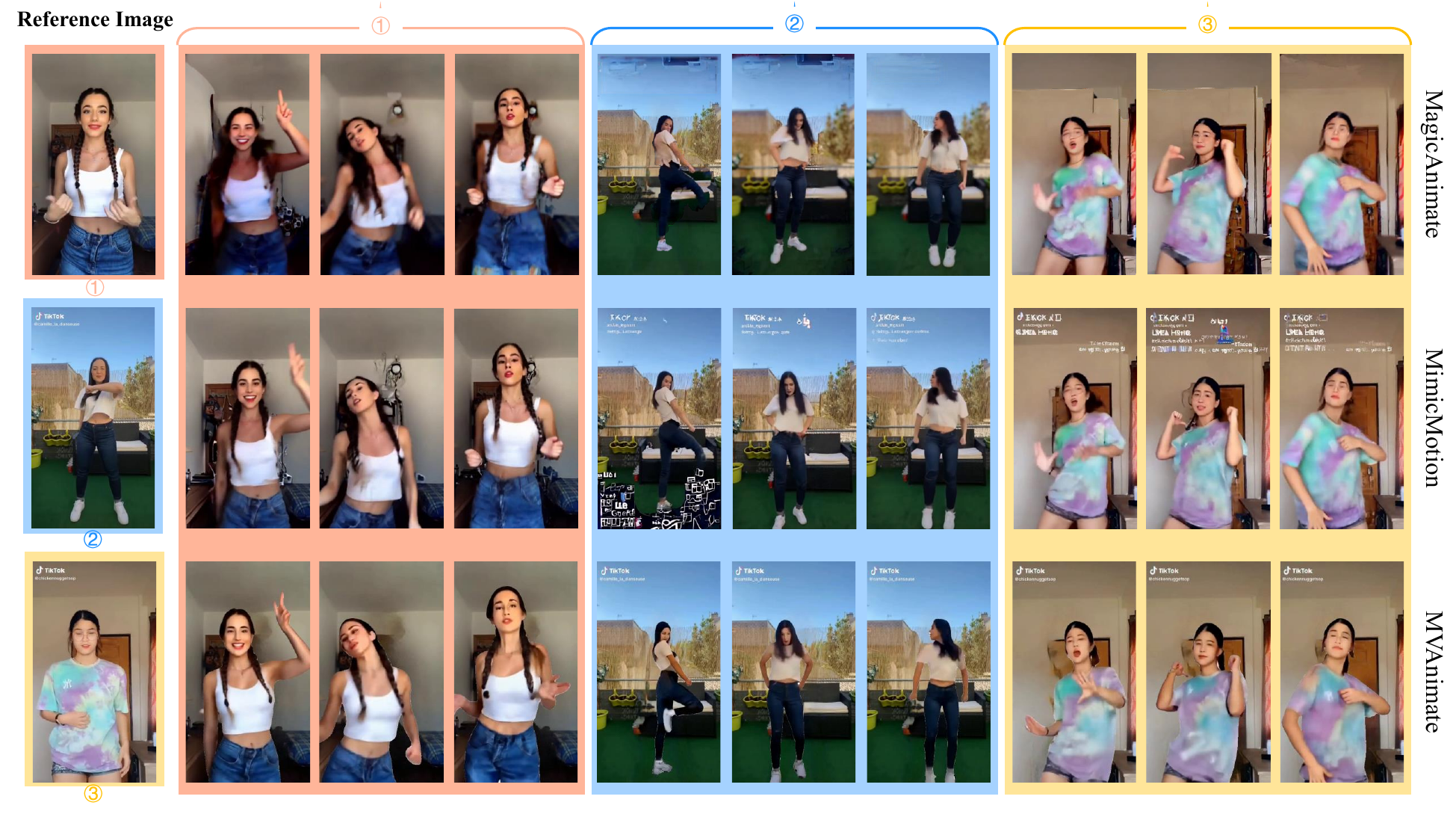}
  \caption{\textbf{Qualitative results on TikTok dancing dataset.} We compare our method with two other SOTA character animation algorithms on different reference images and video frames from the TikTok dataset. }
  \label{fig:tiktok}
\end{figure*}

\subsection{Implementation Details}

The experiments are carried out on the Ubuntu platform, with 8 NVIDIA Tesla V100 for training and a single NVIDIA RTX 6000 Ada for inference. 
As for the augmentation process, we apply virtual try-on algorithms to generate training video pairs in order to alleviate possible texture distortion. 

In most previous works, researchers tend to train each video clip with the first frame as a reference image. Although they make sure the data for training, testing, and evaluation has no intersection, the widely used TikTok~\cite{tiktok} and TED-talks~\cite{tedtalk} datasets have too few samples to train a generative model. Most researchers tend to collect new video datasets from the Internet~\cite{hu2024animate,zhang2024mimicmotion,xu2024magicanimate}, but this would bring an enormous workload for data labeling or calibration. Therefore, we find a simple yet effective way to generate large-scale training datasets by augmenting the original dataset with virtual try-on~\cite{choi2025improving} methods, which can address the problem of data deficiency to some extent. 

The main metrics we utilize measure the quality of the output from the dimension of the image(video frame) and video, including PSNR, SSIM~\cite{wang2004image}, LPIPS~\cite{zhang2018unreasonable}, FID~\cite{heusel2017gans}, FID-VID~\cite{balaji2019conditional}, and FVD~\cite{unterthiner2018towards}. Additionally, more details on the augmentation method and examples of the TikTok dataset for training are disclosed in the supplementary materials. 

\begin{figure*}[t]
  \centering
  \includegraphics[width=0.97\textwidth]{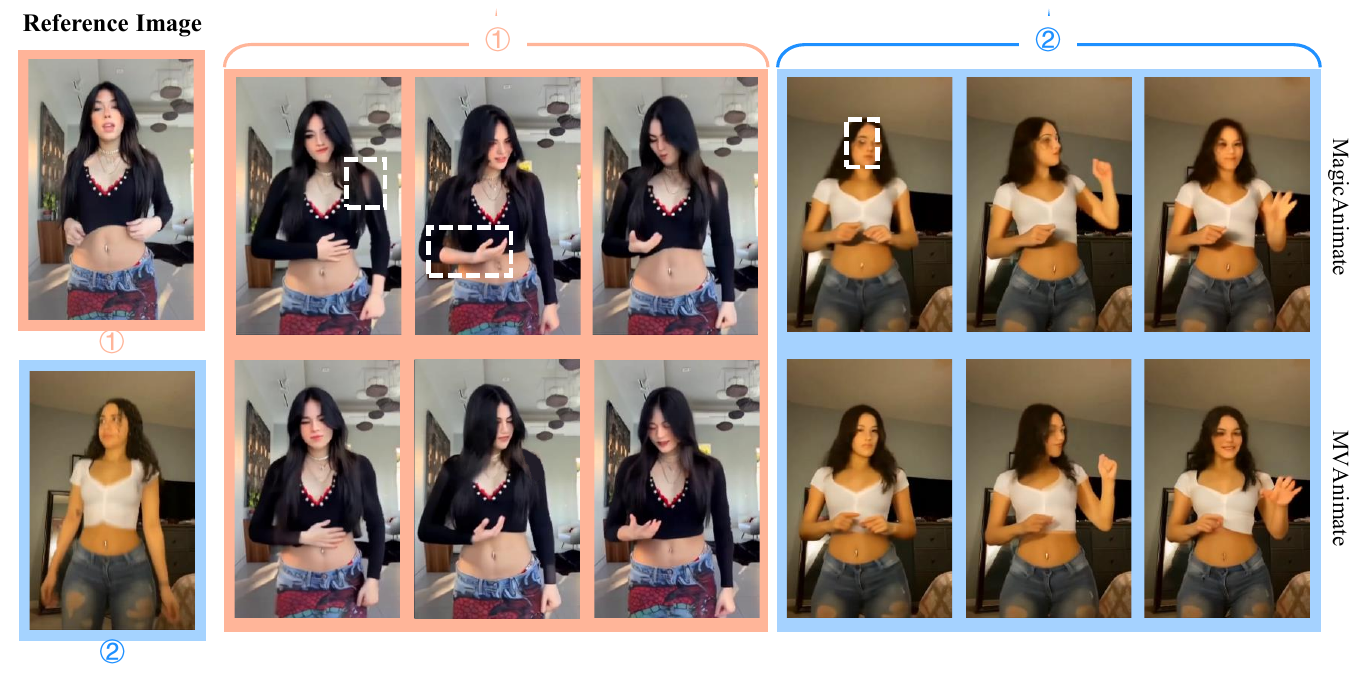}
  \caption{\textbf{Qualitative results on TED-talks dataset.} We compare our method with two other SOTA character animation algorithms on different reference images and video frames from the TED-talks dataset. }
  \label{fig:tedtalk}
\end{figure*}

\subsection{2D Animation Results on TikTok Dataset}

In this section, we present the 2D animation results of our method on the TikTok dancing dataset~\cite{tiktok}. The quantitative results are shown in Tab.~\ref{tab:tiktok}, showing that our MVAnimate has better results compared to other SOTA methods. In Fig.~\ref{fig:tiktok}, we show some qualitative results compared with the mainstream methods MagicAnimate~\cite{xu2024magicanimate} and MimicMotion~\cite{zhang2024mimicmotion}. Our MVAnimate shows a better performance in the three reference images and videos. In the marked areas, our MVAnimate generates clearer motions, gestures, and stable backgrounds. 

\begin{figure*}[t]
  \centering
  \includegraphics[width=\textwidth]{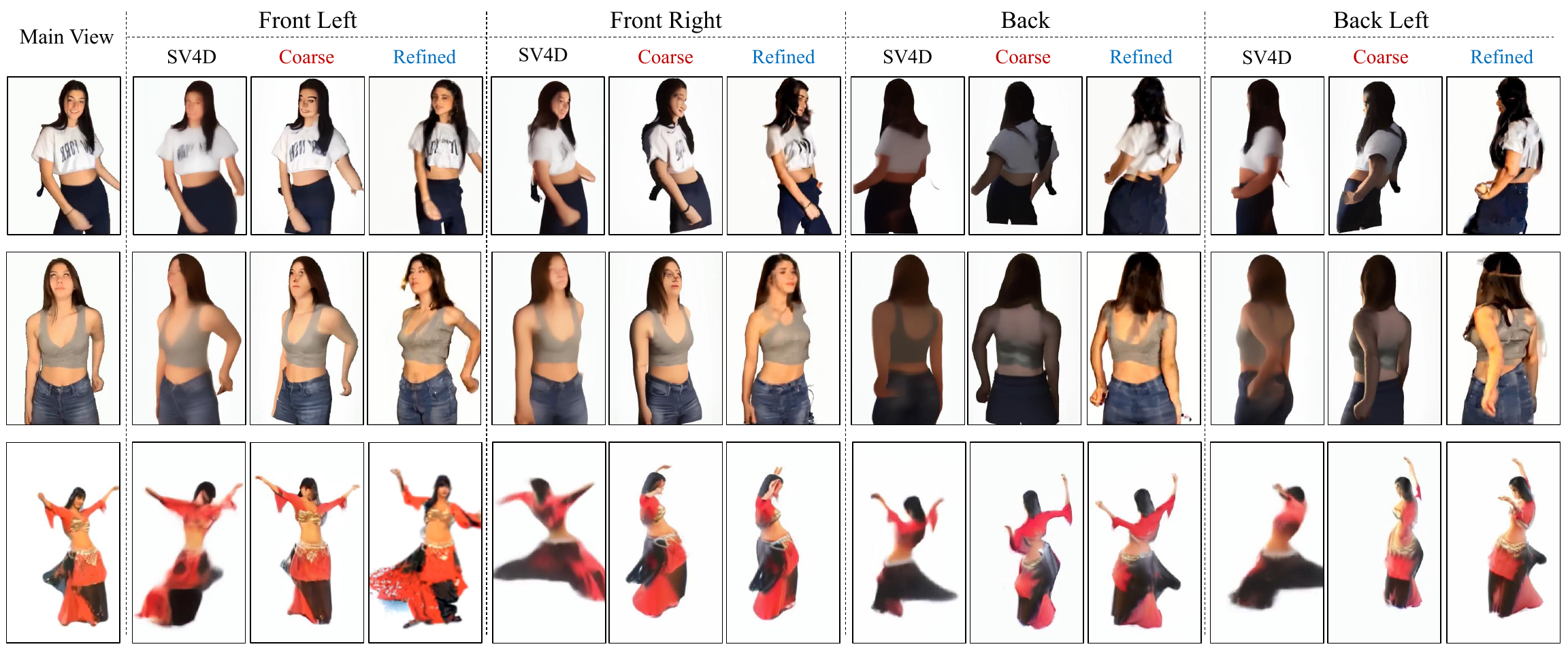}
  \caption{\textbf{Multi-View Video Generation.} Here we show some example frames of the output multi-view videos from our MV-Opt module. The refined outputs show a higher quality compared with the results of SV4D and SV4D 2.0. }
  \label{fig:3d}
\end{figure*}

\begin{table}
    \centering
    \caption{\textbf{2D Character Animation Results on TED-talks Dataset.} The best and second-best metrics are in \textcolor{red}{red} and \textcolor{blue}{blue}. }
    \begin{tabular}{l|c|cc}
        \hline
        \multirow{2}{*}{Method} & \cellcolor{lightgray!30}Video & \multicolumn{2}{c}{\cellcolor{lightgray!30}Image}\\
        \cline{2-4}
        ~ & FVD $\downarrow$ &  FID $\downarrow$ & PSNR $\uparrow$ \\
        \hline
        MimicMotion~\cite{zhang2024mimicmotion} & 145.67 & 26.80 & 30.5\\
        DreamPose~\cite{karras2023dreampose} & 140.12 & 24.80 & \textbf{\textcolor{red}{30.8}}\\
        HumanVid~\cite{wang2024humanvid} & 138.90 & 27.50 & 29.8\\
        MagicPose~\cite{chang2023magicpose} & 136.19 & 25.72 & 30.1\\
        MagicAnimate~\cite{xu2024magicanimate} & \textcolor{blue}{131.51} & \textcolor{blue}{22.78} & 30.5\\
         \hline
        MVAnimate & \textbf{\textcolor{red}{129.66}} & \textbf{\textcolor{red}{22.01}} & \textcolor{blue}{30.7}\\
         \hline
    \end{tabular}
    \label{tab:tedtalk}
\end{table}

\subsection{2D Animation Result on Ted-talks Dataset}

In this section, we present the 2D animation results of our method on the TED-talks dataset~\cite{tedtalk}. Since TED-talk videos contain limited camera motion, methods designed for dance-style datasets (e.g., DisCo~\cite{wang2024disco}, Animate-X~\cite{tan2024animate}) were omitted due to incompatibility with frontal-speaking sequences. We report representative baselines (MimicMotion~\cite{zhang2024mimicmotion}, MagicAnimate~\cite{xu2024magicanimate}) and metrics to ensure fair comparison. 

The qualitative results are shown in Tab.~\ref{tab:tedtalk}. Our MVAnimate achieves the best metrics for FVD, FID, and is second-best for PSNR. In Fig.~\ref{fig:tedtalk}, we show some qualitative results compared with the mainstream methods Animate Anyone and MagicAnimate. Our MVAnimate performs better in the example reference images and videos. The marked regions show that the character and background sequences of our results are more consistent.

\subsection{Multi-View Video Optimization}
\label{sec:mvresult}

In the multi-view optimization module, besides enhancing the quality of the 2D character animation video, the multi-view animation videos are also refined through the optimization process, generating higher-quality multi-view animation videos of the character. In this section, we make a comparison between the coarse multiview videos and the refined ones, together with the first version of the SV4D model~\cite{xie2024sv4d}. Some example multi-view frames are shown in Fig.~\ref{fig:3d}, where we pick 4 out of 7 novel views for display, with the main views listed aside. 

The results unveil some interesting facts, which prove the effectiveness of our model. The novel views from the front are more exquisite than those from the back, showing more details on facial expression, pose position, and clothing textures. It indicates that the views closer to the main view gain more advantage from the optimization process and, as a result, have larger weights during the optimization process. Views from the back, although not as detailed as the front ones, also show an obvious enhancement compared with the coarse ones. Even the darker lighting condition caused by the biased prior information has been corrected by the optimization process, which is quite obvious in the examples of the first two rows. 

\begin{table*}[ht]
    \centering
    \caption{\textbf{Ablation Study on Multi-View Optimization Loss Design.} We test our MV-Opt module in different loss function conditions on the TikTok dancing dataset. The best metrics are marked in \textbf{bold}. }
    \begin{tabular}{ccc|cccc|cc}
        \hline
        \multirow{2}{*}{$L_{temp}$} & \multirow{2}{*}{$L_{mvP}$} & \multirow{2}{*}{$L_{mvS}$} & \multicolumn{4}{c}{\cellcolor{lightgray!20}Image} & \multicolumn{2}{|c}{\cellcolor{lightgray!20}Video}\\
        \cline{4-9}
        ~ & ~ & ~ & PSNR $\uparrow$ & SSIM $\uparrow$ & LPIPS $\downarrow$ & FID $\downarrow$ & FID-VID $\downarrow$ & FVD $\downarrow$\\
        \hline
        \checkmark &  &  & 28.16 & 0.722 & 0.255 & 32.89 & 39.51 & 151.67\\
        \checkmark & \checkmark &  & 29.31 & 0.752 & 0.253 & 31.33 & 39.33 & 143.59\\
        \checkmark & \checkmark & \checkmark & \textbf{29.96} & \textbf{0.775} & \textbf{0.240} & \textbf{30.16} & \textbf{39.11} & \textbf{132.90}\\
         \hline
    \end{tabular}
    \label{tab:loss}
\end{table*}

\begin{table*}[t]
    \centering
    \caption{\textbf{Ablation Study on Multi-View Attention.} We compare attention blocks under different settings to substitute our MV-Attn. The best metrics are marked in \textbf{bold}. }
    \begin{tabular}{c|cccc}
        \hline
        Attention Block Setting & FID-VID $\downarrow$ & FVD $\downarrow$ & PSNR $\uparrow$ & SSIM $\uparrow$\\
        \hline
        LN+MHA+LN+CA & 42.79 (\textcolor{red}{+3.68})& 137.41 (\textcolor{red}{+4.51})& 28.61 (\textcolor{green}{-1.35}) & 0.707 (\textcolor{green}{-0.068})\\
        IN+MHA+IN+CA & 40.67 (\textcolor{red}{+1.56})& 135.90 (\textcolor{red}{+3.00})& 29.04 (\textcolor{green}{-0.92}) & 0.699 (\textcolor{green}{-0.076})\\
        Ada+MHA+Ada+SA & 39.36 (\textcolor{red}{+0.26})& 133.82 (\textcolor{red}{+0.92})& 29.13 (\textcolor{green}{-0.83}) & 0.752 (\textcolor{green}{-0.023})\\
        Ada+MHA+Ada+CA (Ours) & \textbf{39.11} & \textbf{132.90} & \textbf{29.96} & \textbf{0.775}\\
        \hline
    \end{tabular}
    \label{tab:attn}
\end{table*}

\begin{table*}[t]
    \centering
    \caption{\textbf{Ablation Study on Background Inpainting.} We compare our MVAnimate with the condition without background inpainting on the TikTok dataset. The best metrics are marked in \textbf{bold}. }
    \begin{tabular}{c|cccc}
        \hline
        Background Inpainting & FID-VID $\downarrow$ & FVD $\downarrow$ & PSNR $\uparrow$ & SSIM $\uparrow$\\
        \hline
        w & \textbf{39.11} & \textbf{132.90} & \textbf{29.96} & \textbf{0.775} \\
        w/o & 39.15 (\textcolor{red}{+0.04}) & 133.67 (\textcolor{red}{+0.77}) & 29.54 (\textcolor{green}{-0.42}) & 0.759 (\textcolor{green}{-0.016})\\
        \hline
    \end{tabular}
    \label{tab:bg}
\end{table*}

\subsection{Ablation Study}

In this section, we prove the effectiveness of the design of our algorithm by implementing several experiments for comparison. We also explain the phenomena in the experimental performances. Note that we place more ablation studies in the supplemented article and video files.

\subsubsection{Multi-View Optimization Loss Design}

The 3 loss functions in the multi-view optimization module are largely based on the previous studies on multi-view shape or 3D model reconstruction. Therefore, we carry out experimental comparisons on these 3 sub-losses. As shown in Tab.~\ref{tab:loss}, our loss design shows a more promising result on the test set of the TikTok dancing dataset than the other incomplete loss expressions. 

\subsubsection{Multi-View Attention Evaluation}
\label{att}

We provide the ablation study results of our Multi-View Attention block, where different attention blocks are compared altogether. 

In Tab.~\ref{tab:attn}, we show the results with different MV-Attention layers implemented in the MVAnimate backbone. $LN$, $IN$, $Ada$ each represents Linear Normalization, Instance Normalization, and AdaLIN~\cite{adalin}. $MHA$, $SA$, $CA$ stand for Multi-Head Attention, Self-Attention, and Cross Attention. The results show that the setting we implement for MV-Attention gains an advantage over other attention policies.

\subsubsection{Background Inpainting}

We implement the background inpainting operation because of the fact that most animation video generation algorithms~\cite{hu2024animate,zhang2024mimicmotion} tend to blur the background and create a changing background texture throughout the video. In this case, the background of the generated video is unified to a certain texture. Besides, the application of camera motion transfer to the output helps the generation result stick to the camera view motion and can adapt to dynamic scenes. We make a quantitative experiment in Tab.~\ref{tab:bg} to test the performance of background inpainting. We compare the results of our MVAnimate with another pipeline, where the reference image is sent into the ReferenceNet module as a whole, and no further background operation is implemented after the MV-Opt module. The results in Tab.~\ref{tab:bg} show that, although our inpainting method is more similar to a grafted model onto our algorithm, it indeed enhances our overall performance on the dataset. We also make some visualized comparisons in the supplementary materials.

\section{Conclusion}

In this work, we propose MVAnimate, an algorithm to generate high-quality character animation videos. Our method addresses existing problems of 2D or 3D pose modeling-based animation algorithms, including complex gestures, clothing distortion, texture contamination, etc. We also propose an optimization algorithm for higher-quality multi-view video generation, setting up an example for multi-view character animation. We hope that our work will inspire other multi-view-based algorithms in character animation or other related research areas. 




\bibliographystyle{unsrt}
\bibliography{reference}






\clearpage

\begin{figure*}[t]
    \includegraphics[width=\textwidth]{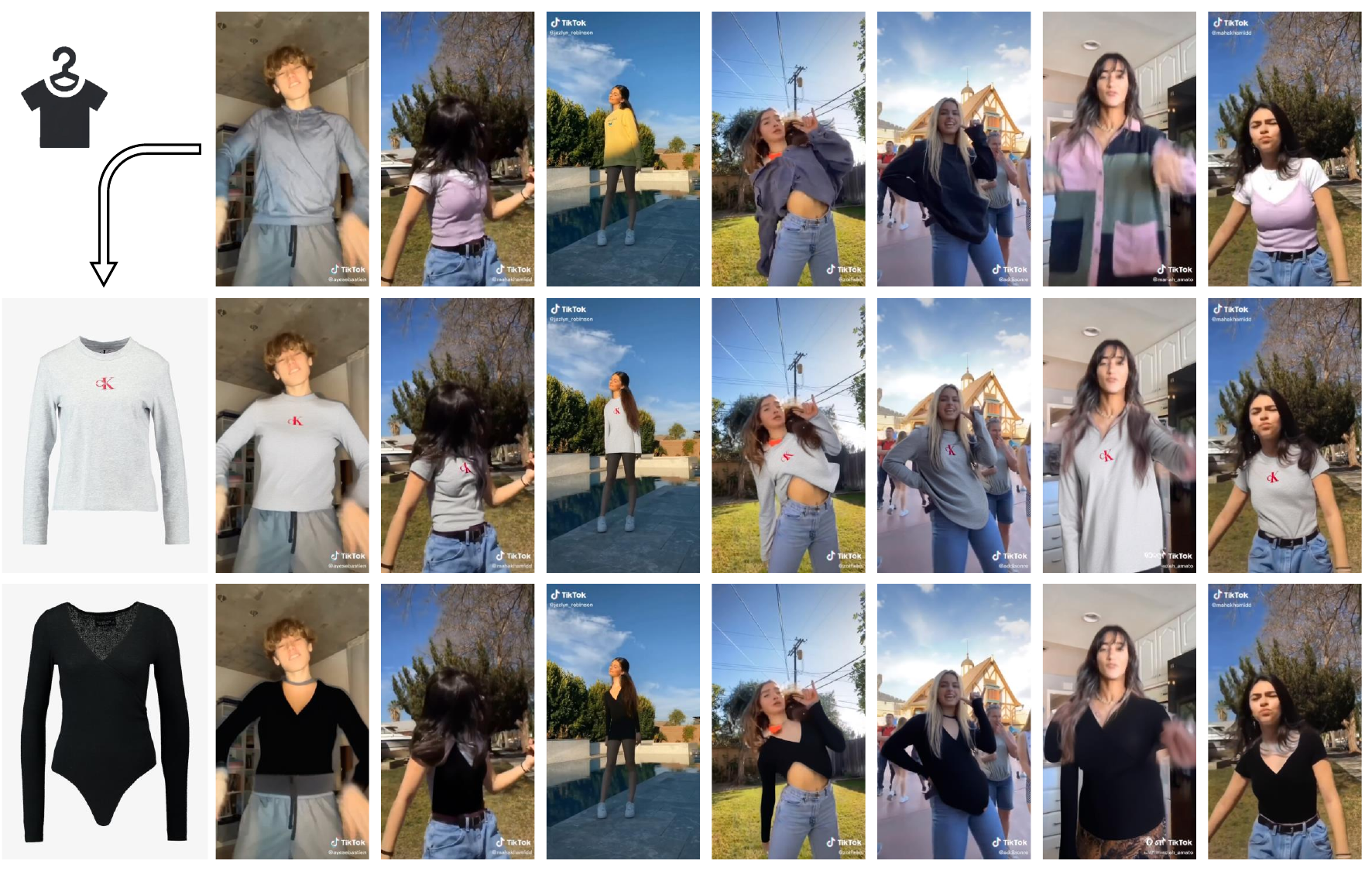}
    \caption{\textbf{Some Examples of Augmentation Results.} Here we show some examples (the second and third rows) augmented from the original TikTok dataset (the first row). These data make up the AugTik we implement as the training dataset. }
    \label{fig:vto}
\end{figure*}%



In our supplementary material, we make a more thorough analysis of our algorithm design, provide more details about our dataset augmentation methods, show more qualitative results to evaluate the background inpainting operation, introduce more experimental results on our MV-Opt module, and give more discussions on the limitations of our MVAnimate. Some more video examples are also attached to the supplementary material. 

\section{Extended Methodology}

In this appendix section, we give a detailed explanation of the design of our MVAnimate, including the comparison with some previous SOTA methods, and some more theoretical or experimental analysis.

\subsection{SVD Backbone Design}

As stated in the main paper, there are numerous algorithms for character animation achieving promising results. Among all these methods, our MVAnimate shows various differences from other SVD-based algorithms. 

\textbf{Modal Selection}: For most previous methods, the pose sequence extracted from the reference video is operated in an encoder and then directed into the denoising U-Net. Numerous methods only extract one type of pose from the video, such as MimicMotion~\cite{zhang2024mimicmotion}, MagicAnimate~\cite{xu2024magicanimate}, LHM~\cite{qiu2025lhm}, etc. There are only a few of them, such as VividPose~\cite{wang2024vividpose}, which implements two pose detection methods, including DWPose and SMPL-X, to separately encode and add up to the noising latent. However, SMPL-X only estimates the character with the parameters of pose $\theta$, shape $\beta$, and facial expression $\phi$, without a parameterization of clothing. Although there are some other 3D human models that fix this problem based on SMPL, such as Cloth3D~\cite{bertiche2020cloth3d} or DeClotH~\cite{nam2025decloth}, they usually build a new model for cloth meshes and introduce a much more time-consuming reconstruction model. Considering that we only need the cloth features from certain views to generate the cloth status from the reference image, a multi-view 2D prior is more suitable for our setting.

\textbf{Pose Guidance}: In our MVAnimate, we choose to insert the pose guidance module into every layer of the denoising U-Net to enhance the effectiveness of human pose guidance in the video generation process. This multi-layer integration allows the pose signal to influence the generation hierarchically across different spatial resolutions, ensuring that both coarse structural layout and fine-grained motion details are consistently aligned with the target pose throughout the denoising process. By distributing the guidance across the network, our approach mitigates the risk of pose information being diluted or lost in deeper layers~\cite{chang2023magicpose,zhang2024mimicmotion}, which can occur when conditioning is applied only once at the input. This helps our method achieve stronger pose fidelity and more temporally coherent motion, particularly in complex or high-frequency motion scenarios.

\subsection{Input View Number}

The input number is a predetermined hyper-parameter, and we set it to 8, considering the neat geometric relation we can obtain. We implement an experiment with different view numbers. Note that when the view number is 1, the MV Attention block is degraded to a zero-tensor generator, distributing nothing to the diffusion backbone. 

\begin{table}[h]
    \centering
    \begin{tabular}{c|cccc}
        \hline
        Num & FID $\downarrow$ & FVD $\downarrow$ & PSNR $\uparrow$ & SSIM $\uparrow$\\
        \hline
        1 & 46.93 & 159.67 & 28.62 & 0.651\\
        2 & 41.50 & 149.61 & 28.82 & 0.720\\
        3 & 42.26 & 157.40 & 28.73 & 0.721\\
        4 & 40.91 & 137.69 & 29.11 & 0.755\\
        8* & \textbf{39.11} & \textbf{132.90} & \textbf{29.96} & \textbf{0.775}\\
        9 & 40.71 & 146.72 & 29.10 & 0.759\\
        \hline
    \end{tabular}
    \caption{\textbf{Results on TikTok Dataset with Different View Numbers.} We select four main metrics to measure the output quality. * stands for the finally implemented version. The best metrics are marked in \textbf{bold}.}
    \label{tab:view}
\end{table}

The results shown in Tab.~\ref{tab:view} prove our hypothesis and help us find a proper balance at the view number of 8. Moreover, with this view number, the detection of the gesture and the formation of spatial-temporal features can be more reasonable. For instance, our premise that opposite views should have the same pixel distribution fails when the view number is odd. This is one of the outcomes for our operation to add horizontal MSE pose loss from the opposite view, which is only possible when the view number is even. Otherwise, only vertical MSE loss will be available.

\section{Dataset Implementation Detail}
\label{sec:data}

In this appendix section, we give a detailed description of the measures we take to implement our method on the TikTok dataset~\cite{tiktok}, as shown in Fig.~\ref{fig:vto}. 

As mentioned in the main paper, due to data deficiency in the TikTok dataset, data augmentation is necessary to address texture contamination. The TED-talks dataset~\cite{tedtalk}, another implemented dataset in our work, does not need augmentation thanks to its large data quantity. We believe this method can be practical for many situations where data deficiency is the main bottleneck. 

\subsection{VTO-Based Data Augmentation}

As mentioned in the main paper, the training data for character animation tasks requires a large amount of annotation operations. This leads to the dilemma that most related datasets have a relatively small quantity compared to other computer vision or graphics tasks. Therefore, various researchers have tried to offer a proper solution to address this problem. Some researchers seek more data from other sources on the Internet~\cite{zhang2024mimicmotion,wang2024disco,xu2024magicanimate,hu2024animate}. Some others just implement their training and testing on a small-scale dataset~\cite{chang2023magicpose}. 

Considering the quantity of data we utilize is less than that used in other large-model-based methods, we decide to apply data augmentation for our dataset, even if the post-processed data quantity can still hardly match up of them. 

We consider that each video can be applied with a virtual-try-on algorithm so that the output videos will share the same pose sequence and have different image/video content. 

After testing different methods, we apply the IDM-VTON algorithm~\cite{choi2025improving} to complete the augmentation. 

\subsection{TikTok Augmentation}

We make augmentation on the TikTok dataset~\cite{tiktok}, due to the deficiency of data quantity, unlike the TED-talks dataset~\cite{tedtalk}, which has a substantial amount of data available. 

In Fig.~\ref{fig:vto}, we display some examples augmented from the TikTok dataset. These samples show fine performance during the inference process. 

We generate AugTik with only 310 of the original TikTok dataset so that the training will not include the other 30 video clips. Theoretically, for each video clip, when we generate $n$ new video clips with new clothes, there should be $C_{n+1}^2$ video pairs that can be implemented in the training process. However, considering the fact that not all original videos can have high-quality new video clips, we set the number of new video pairs to no more than 10. This will keep the number of new video pairs generated from a certain original one at the same level, preventing some of them from taking too large a percentage in the new AugTik. After augmentation, the AugTik dataset consists of 1,500 training video pairs and is utilized to train our MVAnimate. 

\begin{figure*}
  \includegraphics[width=\textwidth]{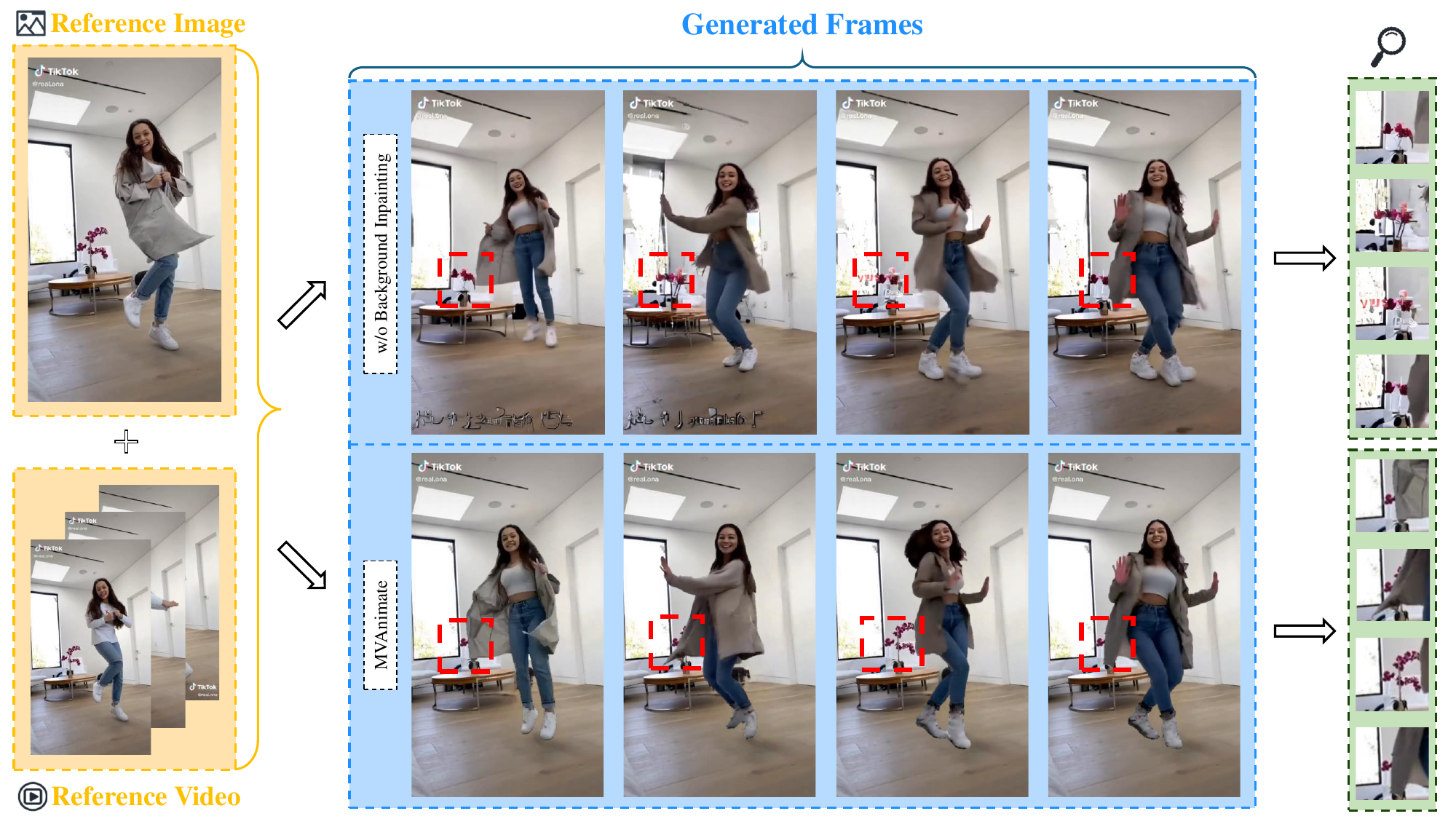}
  \centering
  \caption{\textbf{An Observation on the Effect of Background Inpainting.} We make a comparison without the background inpainting operation, especially on the background texture in several frames. }
  \label{fig:bg}
\end{figure*}

\section{Proof Supplement}

In this section, we provide the supplementary proofs for the propositions and theorems in the main paper. 

\subsection{Proof for Proposition~\ref{prop:invvar}}

\begin{proof}
By substituting $v_t^{(m)} = v_t^\star + \varepsilon_t^{(m)}$ into the estimator,
\[
\hat{v}_t = \sum_{m=1}^M \beta_m (v_t^\star + \varepsilon_t^{(m)})
= \Big(\sum_{m=1}^M \beta_m\Big) v_t^\star + \sum_{m=1}^M \beta_m \varepsilon_t^{(m)}.
\]
Under the constraint $\sum_{m=1}^M \beta_m = 1$, this simplifies to
\[
\hat{v}_t - v_t^\star = \sum_{m=1}^M \beta_m \varepsilon_t^{(m)}.
\]
Hence, the mean squared error is
\[
\mathbb{E}\|\hat{v}_t - v_t^\star\|_2^2
= \mathbb{E}\Big\|\sum_{m=1}^M \beta_m \varepsilon_t^{(m)}\Big\|_2^2.
\]
Because the noises are zero-mean, independent, and isotropic,
\[
\mathbb{E}\Big\|\sum_{m=1}^M \beta_m \varepsilon_t^{(m)}\Big\|_2^2
= \sum_{m=1}^M \beta_m^2 \, \mathbb{E}\|\varepsilon_t^{(m)}\|_2^2
\]
\[
= \sum_{m=1}^M \beta_m^2 \, \mathbb{E}\| \mathcal{N}(0,\sigma_m^2 I_d)\|_2^2.
\]
For $\varepsilon \sim \mathcal{N}(0,\sigma_m^2 I_d)$ we have
$\mathbb{E}\|\varepsilon\|_2^2 = \mathrm{trace}(\sigma_m^2 I_d) = d \sigma_m^2$.
Thus,
\[
\mathbb{E}\|\hat{v}_t - v_t^\star\|_2^2 = d \sum_{m=1}^M \beta_m^2 \sigma_m^2.
\]
Since $d>0$ is a constant factor, minimizing the MSE is equivalent to minimizing
\[
J(\beta) = \sum_{m=1}^M \beta_m^2 \sigma_m^2
\quad \text{subject to} \quad \sum_{m=1}^M \beta_m = 1.
\]
This is a convex quadratic program with a linear constraint. Form the Lagrangian
\[
\mathcal{L}(\beta, \lambda) = \sum_{m=1}^M \beta_m^2 \sigma_m^2 + \lambda \Big( \sum_{m=1}^M \beta_m - 1 \Big).
\]
Taking the derivative with respect to $\beta_m$ and setting it to zero gives
\[
\frac{\partial \mathcal{L}}{\partial \beta_m} = 2 \sigma_m^2 \beta_m + \lambda = 0
\quad \Rightarrow \quad
\beta_m = - \frac{\lambda}{2 \sigma_m^2}.
\]
Plugging this into the constraint $\sum_{m=1}^M \beta_m = 1$ yields
\[
\sum_{m=1}^M \Big( - \frac{\lambda}{2 \sigma_m^2} \Big) = 1
\quad \Rightarrow \quad
- \frac{\lambda}{2} \sum_{m=1}^M \sigma_m^{-2} = 1
\]
\[
\quad \Rightarrow \quad
\lambda = - \frac{2}{\sum_{m=1}^M \sigma_m^{-2}}.
\]
Therefore,
\[
\beta_m = \frac{1}{\sigma_m^2} \cdot \frac{1}{\sum_{j=1}^M \sigma_j^{-2}}
= \frac{\sigma_m^{-2}}{\sum_{j=1}^M \sigma_j^{-2}}.
\]
This shows that the MSE-minimizing linear estimator assigns weights inversely proportional to the per-view noise variances, completing the proof.
\end{proof}

\subsection{Proof for Proposition~\ref{prop:convex}}

\begin{proof}
Each term is a convex quadratic with Hessian $(I-A)^\top(I-A)\succeq 0$ when $\|A\|_2\le 1$.
The sum is convex; averaged operators with coefficients summing to one and nonexpansive summands remain nonexpansive.
\end{proof}

\subsection{Proof for Theorem~\ref{thm:monotone}}

\begin{proof}
By construction of $F$, each term $\mathcal{L}_{\text{temp}}, \mathcal{L}_{\text{mvP}}, \mathcal{L}_{\text{mvS}}$ is a finite sum of squared or $\ell_1$ distances between frames/views (see Eq.~\ref{eq:loss}), hence $F \ge 0$ and $F$ is bounded below. Moreover, the MV-Opt schedule updates one block (either a frame or a view inside a frame) at a time while keeping all other blocks fixed. By the sufficient-decrease assumption, every block update strictly decreases $F$ unless the current block is already optimal, so $\{F(x^{k})\}$ is a monotonically nonincreasing sequence bounded below, hence it converges.

Since $F$ has bounded level sets, the generated sequence $\{x^{k}\}$ is bounded and thus admits limit points. Standard results on cyclic block coordinate descent for continuously differentiable objectives with sufficient decrease~\cite{tseng2001convergence} then imply that every limit point is block-coordinate stationary, i.e., for each block there is no feasible descent direction when all other blocks are fixed. 
\end{proof}

\section{Ablation Study Supplement}

\subsection{Background Inpainting Evaluation}
\label{sec:bg}

In Fig.~\ref{fig:bg}, we compare the results with our method without background inpainting, which did not exclude or inpaint the background during the generation process. We also explicate the details of the preprocessing on the right by magnifying the red-box regions to make the comparison more obvious. 

The result indicates that the generated video clip with background inpainting shows a more stable background, making it closer to reality, which is the same as we expected. Considering that the inpainting process also quantitatively improves the method, we believe it is effective overall. We also place some dynamic experimental results in the video attached to the supplementary material. 

Moreover, implementing background inpainting on other SOTA character animation method~\cite{zhang2024mimicmotion} can also stabilize the background and benefit the generation process. 

\subsection{Multi-View Optimization Evaluation}
\label{sec:mv}


In this section, we show a figure to help illustrate our proposal to implement different weight values on the novel views. During the optimization process of the main view, novel views should be assigned with different weights, as shown in Fig.~\ref{fig:mv}. This also explains why the novel views from the back are less affected in the MV-Opt module. 

\begin{figure}[h]
    \centering
  \includegraphics[width=0.33\textwidth]{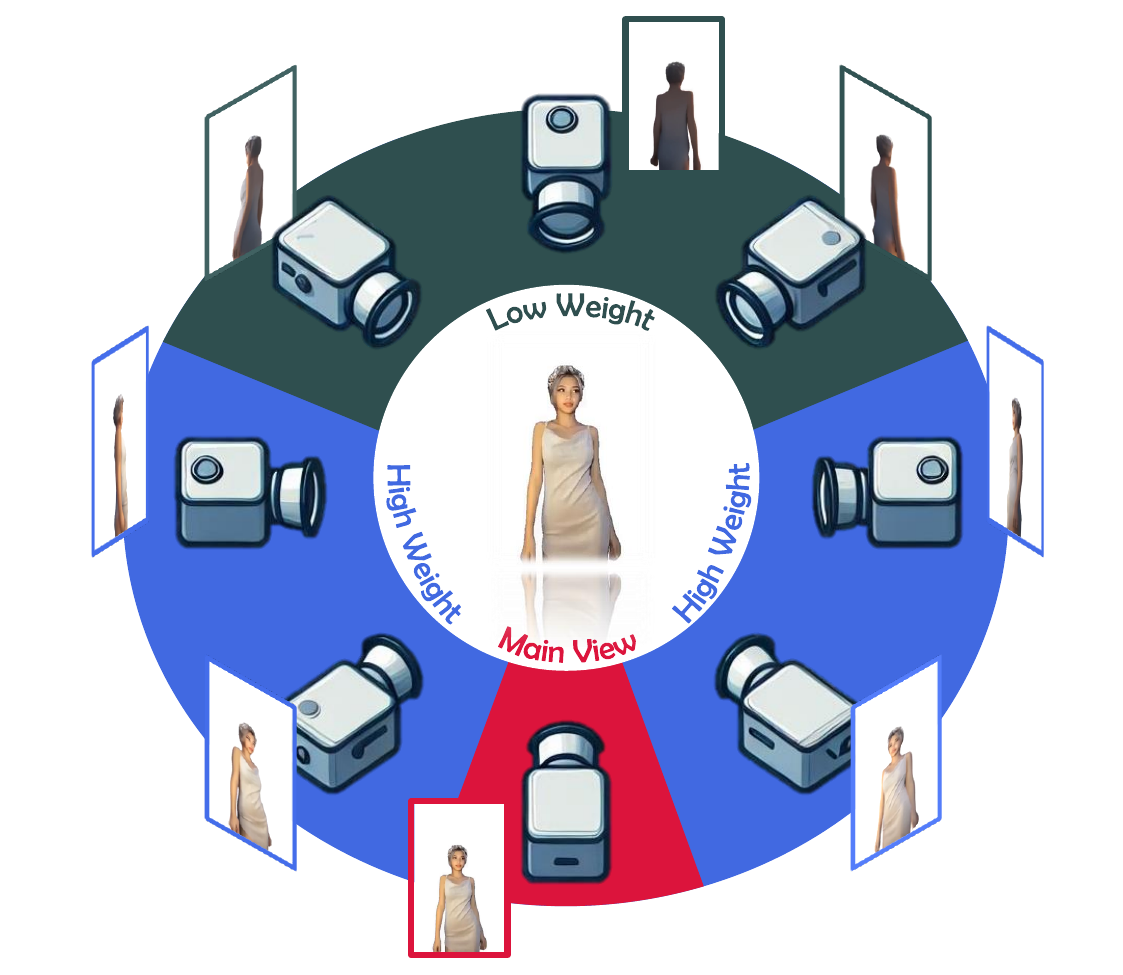}
  \caption{\textbf{Multi-View Weight Distribution.} The novel views closer to the main view have higher weights during training and are more impacted in return. For novel views in the back, it is the opposite situation. }
  \label{fig:mv}
\end{figure}

\section{Discussions}
\label{sec:dis}

\subsection{Corner Case Performance}

As mentioned in the submitted main body, there are two common types of failure in character animation algorithms' performances, including complex poses and texture contamination. 

\begin{figure}[h]
\centering
  \includegraphics[width=0.5\textwidth]{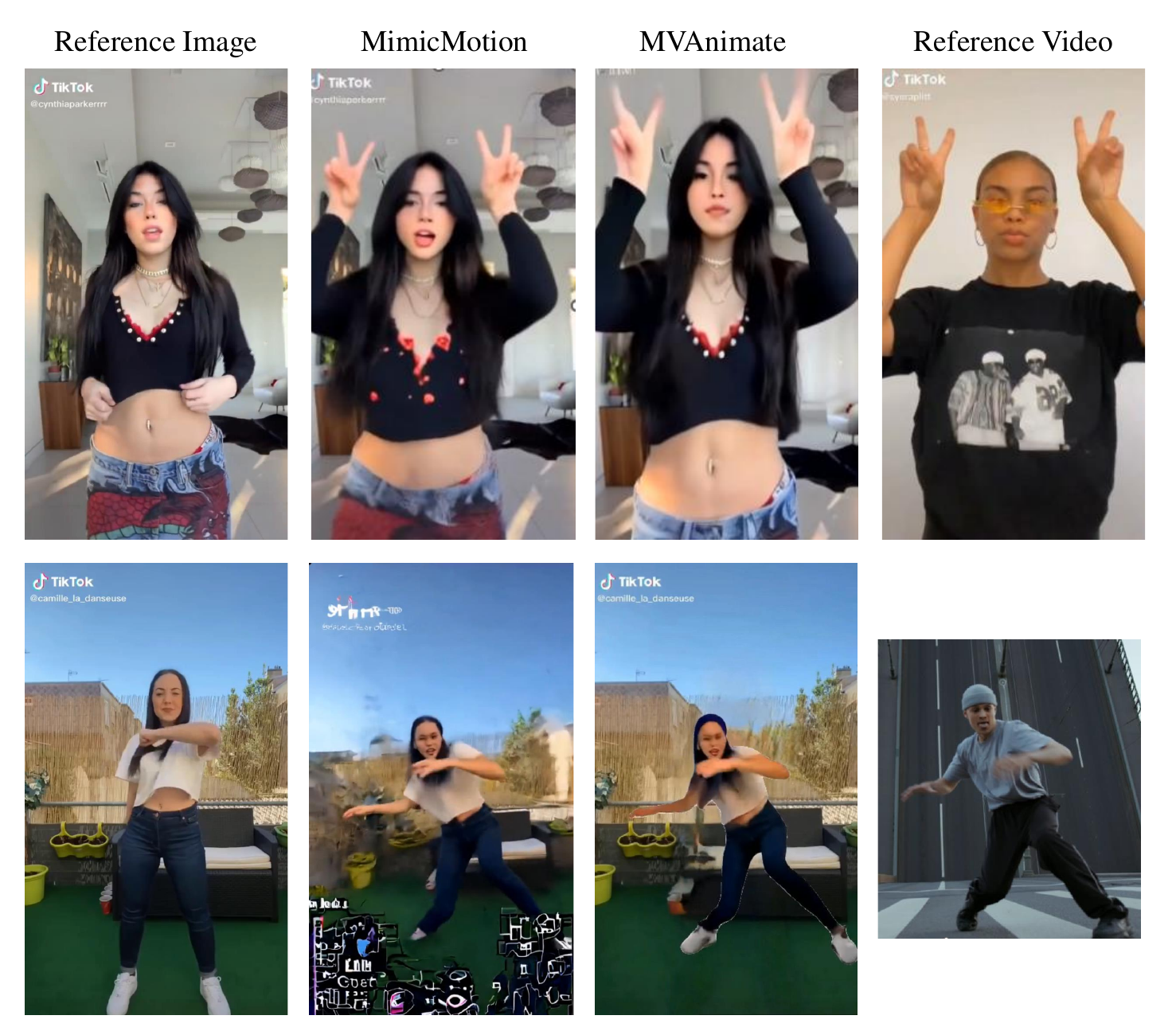}
  \centering
  \caption{\textbf{Corner Cases.} We display some common corner cases in character animation tasks, including texture contamination and complex poses. }
  \label{fig:corner}
\end{figure}

Complex poses with twists or turns can greatly harm the performance of the character animation model. Our MVAnimate can reduce this damage to some extent with the guidance of multi-view video inputs. 

As shown in the first example of Fig.~\ref{fig:corner}, we display some examples of the inference results of our MVAnimate addressing the problem of texture contamination. The output of MVAnimate shows a unified dressing texture throughout the whole output video. 

We also mention the problem of texture contamination in several cases, and we introduce the improvement of our model in these corner cases. As shown in the second example in Fig.~\ref{fig:corner}, our MVAnimate generates a clear motion gesture for the inference image. Although our model still has flaws in some frames or video clips due to the absence of this type of training sample, it still shows a relatively high enhancement compared with that of other methods. 

Some more corner cases are also shown in the video supplementary material. 

\subsection{Limitations}

Although we obtain 3D information with the pre-trained models' multi-view videos, certain problems still remain unsolved. 

\textbf{Multiple Characters.} Our MVAnimate is only designed for a one-character scenario, which means that the Openpose-based multi-view guidance network would be misled, causing MVAnimate to be unable to generate new animation videos. More methods will be introduced to address such problems in the future. 

\textbf{Dynamic Scenes.} We implement some tricks, such as background inpainting, which might be degraded for dynamic or outdoor scenes~\cite{li2025efficient,liu2024dynvideo}. The TikTok dataset~\cite{tiktok} and the TED-talks dataset~\cite{tedtalk} only contain a few dynamic scenes, making this problem less severe in our experiments. However, these problems will surely make certain other scenarios tough for our MVAnimate. 

\end{document}